\documentclass{dcogpaper}

\usepackage{inconsolata}
\usepackage[inline]{enumitem}
\usepackage{array}
\usepackage{flafter}
\usepackage{cleveref}
\usepackage{tikz}
\usetikzlibrary{arrows.meta}
\graphicspath{{figures/}}
\newtheorem{proposition}{Proposition}
\newtheorem{remark}{Remark}
\AddToHook{env/proposition/begin}{\crefalias{section}{proposition}}
\AddToHook{env/remark/begin}{\crefalias{section}{remark}}
\crefname{appendix}{appendix}{appendices}
\Crefname{appendix}{Appendix}{Appendices}
\newcommand{\R}{\mathbb{R}}
\newcommand{\cand}{\mathcal{V}}
\newcommand{\experts}{\mathcal{E}}
\newcommand{\ie}{i.e.\ }

\title{A Jury, Not an Electorate:\\
       Label-Free Reliability Estimation\\
       for Ensembles of Language Models}
\author{Alessandro Rozza\thanks{Correspondence: [first name]@distributed.inc.}\and
Davide Mambelli\texorpdfstring{\footnotemark[1]}{}\and
Steven Motta\texorpdfstring{\footnotemark[1]}{}\and
Juri Taresch\texorpdfstring{\footnotemark[1]}{}}
\affiliation{distributed cognition}
\date{2026-10-14}
\shorttitle{A Jury, Not an Electorate}
\paperstatus{Preprint}
\paperlogo{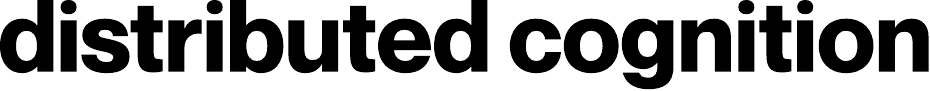}
\papersymbol{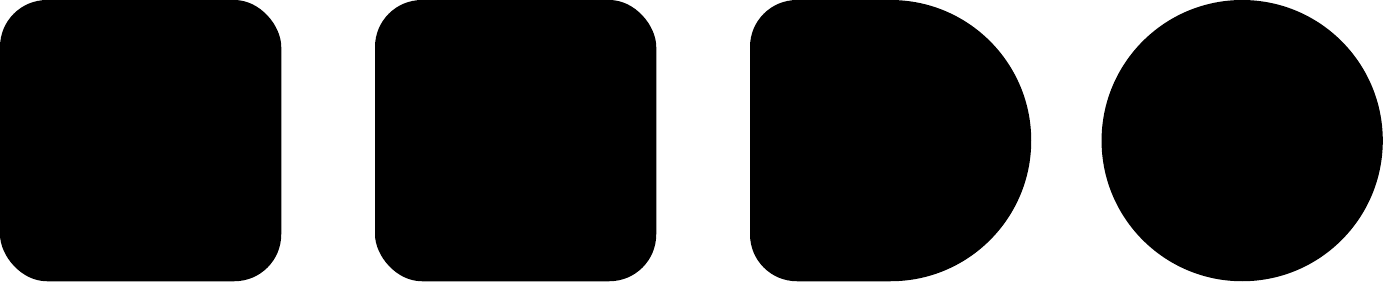}
\papercopyright{\textcopyright\ 2026 distributed cognition. All rights reserved.}

\begin{document}
\maketitle

\begin{abstract}
% An ensemble of language models is a jury, not an electorate. The answers are testimony from witnesses of unknown reliability, and the question is not what the majority said but who is credible. Most ensemble methods assign each
% expert a fixed reputation, often learned from labelled data. We ask whether that reputation can instead be
% inferred from the pattern of agreement and disagreement among the experts. %themselves.
% We model the experts as a jury: each question has a latent true answer, each expert reports it with an unknown
% reliability, and those reliabilities are estimated by Expectation Maximisation (EM) from unlabelled answer
% configurations alone. We show that its decision rule reduces exactly to a log-odds weighted linear score
% over the experts' answer distributions. What changes is not how the jury votes, but how it learns whom to trust.
% On a 1560-question benchmark spanning MMLU, GSM8K, ARC-Challenge, and HellaSwag,
% the jury reaches $76.9 \pm 1.3\%$ with nine experts of up to 3B parameters. It performs comparably to the same fusion rule
% with weights fitted from labels, so on this pool the jury learns effective expert weights
% from testimony alone, without measurable loss from removing weight supervision.
% It also beats the pool's best single expert by $2.0$ points ($95\%$ confidence interval $[+0.5,+3.6]$) and the
% strongest label-free alternative on the full benchmark by $5.5$ points.

An ensemble of language models is a jury, not an electorate. The answers are testimony from witnesses
of unknown reliability, and the question is not what the majority said but who is credible.
Most ensemble methods assign each expert a fixed reputation, often learned from labels.
We ask whether it can instead be inferred from the pattern of agreement and disagreement
among the experts.
We model the experts as a jury: each question has a latent true answer, each expert reports it with
unknown reliability, and those reliabilities are estimated by Expectation Maximisation (EM) from unlabelled answer configurations alone. We show that its decision rule reduces to a log-odds weighted linear score over the experts' answer distributions. What changes is not how the jury votes, but how it learns whom to trust.
On a 1560-question benchmark, the jury, with nine experts of up to 3B parameters, performs comparably to the same fusion rule with label-fitted weights, showing no statistically significant advantage from supervision.
Instead, it outperforms the best single model with matched self-consistency and the strongest label-free alternative on the full benchmark.

%It performs comparably to the same fusion rule with weights fitted from labels,
%while the jury learns effective expert weights from testimony alone, with no measurable loss
%from removing supervision.
%It also beats the pool's best model and the strongest label-free alternative on the full benchmark by $5.5$ pp.

%It also beats all pool’s models and the strongest label-free alternative by $5.5$ pp.

%The improvement comes from two different effects. Keeping each expert’s full answer distribution instead of only its argmax adds 8.4 points over majority voting. A sample-matched control shows that this gain comes from preserving the distribution itself, not simply from drawing more samples. Weighting the experts adds another 5.5 points, and the jury learns this weighting without labels.
\end{abstract}

% ===========================================================================
\section{Introduction}
% ===========================================================================

A collection of language models can know more than any one of its members. Models trained at different scales,
on different corpora, or for different domains fail in different places, so the answer that one model misses
may already be present elsewhere in the pool. The difficulty is not creating diversity, but deciding how to use it when models disagree.

Most existing approaches treat model aggregation like an electorate with known candidates.
Each model arrives with a reputation,
learned from labelled data or inferred from the query, and the final answer is chosen by applying those
reputations to the models' votes. In that view, credibility is decided before the answers are heard.

We take a different perspective: once all experts have answered, the question is no longer how much we already trust each model, but what their testimony tells us about them. The evidence is the configuration of answers: which answers appear, how strongly they are supported, and which experts agree. A jury can judge who to trust from the testimony itself.
%We take a different perspective: once all experts have answered, the question is no longer how much do we already trust each expert, but what does their testimony tell us about who to trust. The evidence is the configuration of the answers, the answers that appear, the degree to which they are supported, and the experts who agree with them.
%A jury can judge who to trust from the testimony itself. %It can infer credibility from the pattern of testimony itself.

A pool can improve on a plain vote in two separate ways.
The first comes from keeping the answer distributions: reducing each expert to its argmax discards the rest of
its distribution. The second comes from weighting the experts: even when the distributions are kept,
the experts still need to be weighted appropriately. %The two effects are distinct.
The two are often mixed together, but they are conceptually different.
On our benchmark, keeping the answer distributions improves majority voting by 8.4 points,
and weighting the experts adds a further 5.5 points. A sample-matched control shows that the first gain
comes from the distribution itself rather than from additional test-time sampling.

Our contribution concerns the second gain: learning whom to trust without labels.
%We isolate the effect of retaining the answer distributions from both expert weighting and additional sampling compute.
We separate the effect of retaining the answer distributions from the effects of expert weighting and additional sampling.
We use a simple generative model, inspired by \citet{dawid1979em}, where each question has a latent true answer and each expert has an unknown reliability.
%We posit a simple generative model in which each question has a latent true answer and each expert reports it with an unknown reliability.
Those reliabilities are estimated by EM from unlabelled answer configurations alone.
The model never sees a gold answer. It sees the same training questions as the supervised baselines,
but only the experts' testimony.

Mathematically, the resulting decision rule is not a new family of ensemble methods. With one answer per expert,
it reduces exactly to the classical log-odds weighted vote. With answer distributions, it becomes a weighted
linear score over those distributions, equivalent to a linear opinion pool when the weights are non-negative
and a signed linear fusion otherwise.
Neither the vote nor the estimator is new in form: the first is a standard opinion pool, the second is an EM for a classical crowdsourcing model. What this work adds is the adaptation of that model to language-model outputs, and a direct test of whether reliabilities inferred from agreement can replace reliabilities measured on labels.

This allows a direct comparison with the same fusion rule whose weights are fitted from labelled questions,
which we call the supervised linear fusion. On a 1560-question benchmark spanning six MMLU subjects, GSM8K,
ARC-Challenge, and HellaSwag, nine experts of up to 3B parameters combined by the jury reach $76.9 \pm 1.3\%$.
This is 2.0 points above the pool's best single expert, with a 95\% confidence interval of $[+0.5,+3.6]$,
1.6 points above that expert when it is given the same eight-sample self-consistency on GSM8K ($[+0.0,+3.1]$), and
substantially above the best label-free alternative on the full benchmark. The jury also performs comparably to
the supervised linear fusion.

%On this pool, supervised fitting provides no measurable advantage over estimating the weights from agreement alone.

Our contributions are fourfold.
\begin{enumerate*}[label=(\arabic*)]

\item We separate two gains that are often mixed together. The first is the use of answer distributions instead of argmax answers, which applies to any method. The second is how the experts are weighted.
%\item We introduce an EM-based label-free estimator of expert reliability based on a simple generative model of the answer configuration. % we don't introduce it
\item We adapt the one-coin Dawid–Skene model to ensembles of language models with observed candidate sets and sampled answer distributions, and use it to obtain expert weights for linear fusion without labels.
%\item We prove label-free identifiability under task-defined support and characterise the resulting rule exactly. On single answers it reduces to a log-odds weighted vote, and on answer distributions to a signed linear fusion. The contribution is therefore the label-free estimation of the weights, not the fusion rule itself.
\item We characterise the resulting rule exactly: on single answers it is the log-odds weighted vote, and on answer distributions a signed linear fusion. This makes the same rule with label-derived reliabilities the natural supervised reference. We also give a closed-form, moment-based identifiability result for the one-coin model with $K \ge 3$ that does not require any expert to be better than chance.
\item We extensively evaluate the method by ablating its assumptions and comparing it against ten post-hoc combiners under a common full-benchmark protocol, spanning both supervised and label-free methods, including SkillAggregation, with a $+2.7$ pp margin over its strongest tested variant.

%Our contributions are fourfold.
%\begin{enumerate*}[label=(\arabic*)]
    %\item We separate two gains that are often mixed together. The first being %the usage of the answer distributions over the argmax of all answers, which %extends to any method. The second is how the experts are weighted.

    %\item We introduce an EM-based label-free estimator of expert reliability %based on a simple generative model of the
    %answer configuration.

    %\item We characterise the resulting rule exactly. On single answers it %reduces to a log-odds weighted vote,
    %and on answer distributions to a signed linear fusion. This identifies weight %estimation as the methodological
    %contribution.

    %\item We extensively evaluate the method both ablating its assumptions and %compaing it against ten post-hoc combiners under a common full-benchmark %protocol, spanning both supervised and label-free methods, such as %SkillAggregation which we outperform by 2.7 pp.

    %including supervised versions of the same functional form, paired and nested bootstrap analyses,
    %sample-matched controls, task-specific reliabilities, and ablations of the model assumptions.
    %We additionally compare with SkillAggregation on the multiple-choice subset, and replicate the comparison on every sub-pool of three, five and seven of the nine experts.
    %The jury is the strongest label-free method we test; it is +0.6 points above the supervised linear fusion given the fitted models, with no statistically significant difference on fresh questions.
\end{enumerate*}

% ===========================================================================
\section{Related work}
\label{sec:related}
% ===========================================================================

We follow the taxonomy of \citet{chen2025llmensemble}, which divides LLM
ensembling into what happens before, during and after inference.

\paragraph{Before inference: routing.}
% Routers select one model before generation using correctness labels
% \cite{ding2024hybrid,feng2024graphrouter}, reward-model scores
% \cite{lu2023zooter}, human preferences \cite{ong2024routellm}, bandit
% objectives \cite{metallm2024,pickllm2024}, or information-theoretic
% criteria under a budget \cite{infotheoretic2026}.
A router maps the query to one model, trained on
correctness labels \cite{ding2024hybrid,feng2024graphrouter}, on reward-model
scores \cite{lu2023zooter}, or on human preferences \cite{ong2024routellm};
others cast the choice as a bandit \cite{metallm2024,pickllm2024} or select a
subset by mutual information under a budget \cite{infotheoretic2026}.
Simple embedding-based routers can also be competitive
\cite{knnrouter2025,avengers2025}.
This family is complementary to ours: it decides whom to ask, while we
decide whom to believe once everyone has answered.

%\paragraph{Before inference: routing.}
%This is the largest category. A router maps the query to one model, trained on
%correctness labels \cite{ding2024hybrid,feng2024graphrouter}, on reward-model
%scores \cite{lu2023zooter}, or on human preferences \cite{ong2024routellm};
%others cast the choice as a bandit \cite{metallm2024,pickllm2024} or select a
%subset by mutual information under a budget \cite{infotheoretic2026}.
%Recent work shows that simple routers can be competitive with more elaborate ones:
%a simple $k$-NN over embeddings performs comparably to learned routers \cite{knnrouter2025},
%and a similarity-based method matches more complex routing methods \cite{avengers2025}.
%This family is complementary to ours: it decides whom to ask, while we decide whom to believe once everyone has answered.

\paragraph{During inference: token, span and process fusion.}
These methods combine models during generation through vocabulary alignment
\cite{huang2024deepen,yu2024gac}, perplexity-based weighting
\cite{mavromatis2024packllm}, span-level cross-scoring
\cite{coolfusion2024}, speculative decoding \cite{specensemble2025},
or process-reward search over partial reasoning chains \cite{lemcts2024}.
They rely on logits or partial generations, while our method only uses final
answers.

%\paragraph{During inference: token, span and process fusion.}
%Methods belonging to this family interleave the models' generations. The hard technical problem is
%that heterogeneous models do not share a vocabulary, which DeePEn
%\cite{huang2024deepen} solves with a relative representation and GaC
%\cite{yu2024gac} with a union vocabulary; PackLLM \cite{mavromatis2024packllm}
%weights models at test time by perplexity. Span-level variants generate and cross-score fragments
%\cite{coolfusion2024} or borrow speculative decoding to make an ensemble cheaper
%than the sum of its parts \cite{specensemble2025}, and process-level variants
%search over partial reasoning chains with a process reward model
%\cite{lemcts2024}. These methods rely on logits or partial generations, while our method only uses final answers.

% \paragraph{After inference: the family this paper belongs to.}
% Post-hoc aggregation either weights experts or scores completed answers.
% Expert-weighting methods include majority voting \cite{li2024moreagents},
% unsupervised quality estimation \cite{guha2024smoothie},
% diversity-based selection \cite{tekin2024topla}, and the correlation-aware
% Optimal Weight and Inverse Surprising Popularity (ISP) rules of
% \citet{beyondmajority2025}.
% Answer-scoring methods instead use pairwise cross-encoders
% \cite{jiang2023llmblender}, supervised agreement and confidence features
% \cite{si2023more}, or hidden-state confidence estimates \cite{lens2025}.

\paragraph{After inference: the family this paper belongs to.}
Aggregating completed answers splits into two groups. \emph{Weighting the experts} covers
plain majority voting over sampled outputs \cite{li2024moreagents}, unsupervised
per-sample quality estimation \cite{guha2024smoothie}, diversity-driven subset
selection with a learned combiner \cite{tekin2024topla}, and the correlation-aware
Optimal Weight and Inverse Surprising Popularity (ISP) rules of
\citet{beyondmajority2025}.
%\emph{Scoring the answers} is the smaller group: 
\emph{Answer-scoring} methods form the smaller group: \citet{jiang2023llmblender} rank candidates with a pairwise cross-encoder over their text and then generate a fused
answer, \citet{si2023more} select among candidates with a supervised model over
inter-expert agreement and confidence, and LENS \cite{lens2025} learns per-model
confidence from hidden states.

Our method is a weighted linear fusion of the experts' answer distributions, with
the reliability vector as its only learned content (\Cref{sec:method}). We estimate those
reliabilities directly from the experts' unlabelled answer configurations. Other
label-free methods that weight the experts estimate a \emph{model's} quality: Smoothie \cite{guha2024smoothie} from
pairwise distances between outputs, and ISP \cite{beyondmajority2025} from
second-order agreement statistics that detect answers more popular than predicted.
\citet{beyondmajority2025} also propose OW-L and OW-I, which estimate each
model's accuracy without labels, from the pairwise agreement table or from ISP's
verdicts used as pseudo-labels, and then apply Optimal Weight; these are the
closest published counterparts of our estimator. %, and \Cref{app:skillagg} compares against them.

A close reference-free setting is SkillAggregation
\cite{sun-etal-2025-skillaggregation}, which learns to combine multiple LLM
judgments without ground-truth labels and accepts both binary and probabilistic
estimates. It extends Crowdlayer \cite{rodrigues2018deep} with a learned context encoder and per-judge skill
estimates, including context-dependent skills. Our method fits a restricted generative model to unlabelled
answer configurations, yielding log-odds weights for a fixed linear fusion rule. %(\Cref{prop:pool}).

\paragraph{Opinion pooling.}
Combining probability distributions is a classical problem in statistics.
With non-negative weights, our rule is Stone's linear pool
\cite{stone1961opinion}; negative log-odds weights instead yield a signed
linear fusion. The main alternative is the logarithmic pool, a weighted
geometric mean related to the product of experts \cite{hinton2002products};
the principles distinguishing the two are reviewed by
\citet{genest1986combining}. DeGroot's consensus model instead iteratively
averages expert opinions \cite{degroot1974consensus}.

In this literature, weights are obtained from experts or fitted on
questions with known answers, and equal weighting is often competitive
\cite{clemen1999combining}. In our work, the generative model of
\Cref{sec:method} determines the linear-fusion weights, and EM estimates them
from agreement alone.

%\paragraph{Opinion pooling.}
%Combining several probability distributions over one outcome space into a single
%distribution is an old problem in statistics, and its oldest family is the
%\emph{linear} pool of \citet{stone1961opinion}.
%With non-negative weights, our rule is a linear pool; because log-odds weights can be negative,
%in general it is a signed linear fusion.
%The standing alternative is the \emph{logarithmic}
%pool, a weighted geometric mean, which reappears in machine learning as the
%product of experts \cite{hinton2002products}. The two differ in what they preserve: a linear pool keeps a
%candidate alive when one expert insists on it, while a logarithmic pool kills it when one expert rules it out.
%The axioms that select between them are summarised by \citet{genest1986combining}.
%\citet{degroot1974consensus} gives the other classical construction, in which the
%experts converge on a consensus by repeatedly averaging each other rather than
%being averaged by an outside party.

%In this literature, weights are usually obtained from expert judgments or fitted
%on questions with known answers, and a recurring finding is that equal weights are
%hard to beat \cite{clemen1999combining}. Here, the generative model of
%\Cref{sec:method} determines the linear-fusion weights and EM estimates them from
%agreement alone.

\paragraph{Cascades.}
Cascades escalate through models until an answer is accepted
\cite{chen2023frugalgpt}, using stopping rules
\cite{cascaderouting2024}, learned policies \cite{der2024}, or
self-consistency \cite{modelswitch2025}. They address the accuracy-cost
trade-off rather than post-hoc aggregation.

%\paragraph{Cascades.}
%A third line escalates through models until an answer looks good enough
%\cite{chen2023frugalgpt}, with optimal stopping derived in closed form
%\cite{cascaderouting2024}, learned as a Markov decision process \cite{der2024}, or driven by
%self-consistency \cite{modelswitch2025}. Cascades address the accuracy–cost trade-off in a different way.

\paragraph{Classical roots.}
Recovering a latent truth and the
annotators' reliabilities jointly, from agreement and without labels, is
\citet{dawid1979em}; weighting voters by the log-odds of their accuracy is optimal
for conditionally independent voters \cite{nitzan1982optimal}, with the $K$-ary
form we use given in \citet{beyondmajority2025}; conditioning a combination on the
input is the
mixture-of-experts gate \cite{jacobs1991moe,shazeer2017moe};
%and detecting answers more popular than predicted is the approach proposed in \cite{prelec2017surprising}.
and \citet{prelec2017surprising} propose the approach for detecting answers that are more popular than predicted.
%Our reading of the LLM-ensembling literature is that
%it reached for supervision before exhausting what this label-free machinery
%already offers, and the experiments below quantify that within one nine-model pool and across its sub-pools.

% ===========================================================================
\section{Method}
\label{sec:method}
% ===========================================================================

\Cref{sec:why} shows which answers a weighted vote over the experts' argmax answers can reach, and identifies two separable
sources of improvement. \Cref{sec:model} introduces the generative model and
derives the EM updates that estimate each expert's reliability without labels.
\Cref{sec:rule} characterises the resulting decision rule, and \Cref{sec:ablations} outlines the modelling choices. %evaluated in \Cref{sec:ablresults}.

\subsection{What a weighted vote over argmax answers can and cannot do}
\label{sec:why}

Fix a question $q$ and let $\experts$ be the finite set of experts in the pool.
Each expert $e \in \experts$ returns an answer $a_e$ from a task-dependent answer
space $\mathcal{A}$; only the finite set of distinct answers actually produced, the candidate set
$\cand(q) = \{a_e : e \in \experts\} \subseteq \mathcal{A}$, enters the model.
%with $1 \le |\cand(q)| \le |\experts|$.
A weighted vote with weights
$w \in \R^{|\experts|}$ returns
\begin{equation}
  \hat{y}(w) \;=\; \operatorname*{arg\,max}_{v \in \cand(q)} \; \sum_{e\,:\,a_e = v} w_e .
  \label{eq:vote}
\end{equation}
We let $w$ range over $\R^{|\experts|}$ rather than over the simplex because the
model-derived weights can be negative for sufficiently weak experts, and the vote
and the model should speak about the same object.

A weighted vote over argmax answers can only return an answer in $\cand(q)$.
%A weighted vote over argmax answers can only select from in $\cand(q)$.
Conversely, any $v \in \cand(q)$ can be made the winner by assigning all weight
to an expert that proposed it. Changing the weights can therefore change which
observed answer wins, but cannot enlarge the set of possible winners. Two
separable gains lead beyond such a rule: keeping the answer distributions, which
can enlarge the reachable set, and weighting the experts better.

\subsection{A working generative model of the answer configuration}
\label{sec:model}

Rather than score a candidate from features of its supporters, we model how the
answers arose. For a question with candidate set $\cand$, let $z \in \cand$ be the
latent true answer. Each expert $e$ reports it with reliability $\pi_e$, and
otherwise reports one of the remaining candidates uniformly:
\begin{equation}
  P(a_e = v \mid z) \;=\;
  \begin{cases}
    \pi_e, & v = z,\\[2pt]
    \dfrac{1 - \pi_e}{m}, & v \neq z,
  \end{cases}
  \label{eq:emission}
\end{equation}
with $m = |\cand| - 1$. When $m = 0$ the pool is unanimous, $\cand$ is a singleton, \eqref{eq:emission} is
vacuous and the verdict is that answer; such questions are also excluded from the
estimation below, since they constrain no parameter.

\paragraph{What $\cand$ is, and what the model is defined on.}
$\cand(q)$ is not a fixed label space. It is the set of distinct answers produced
by the pool on $q$: in the greedy regime the distinct argmax answers, and with
repeated samples the union of all answers produced by the experts, over which
$p_e$ is the expert's empirical frequency. If repeated samples are unavailable,
$p_e$ reduces to a point mass at the greedy answer; a non-answer is omitted from
the likelihood. During fitting and prediction the set is treated as a fixed
support, and \eqref{eq:emission} is a simplified emission model defined on that
support: it does not model how the support itself was generated. This matters
for what the model can claim. Conditioning on the observed support would itself
induce dependence between the experts (if two experts can say $A$ or $B$ and the
observed set is $\{A, B\}$, they are known to disagree), so the conditional
independence used below is an assumption of the simplified model, not a
consequence of the construction. The error channel is uniform over the
alternatives in the support rather than over the task's full answer space, and a
candidate outside $\cand(q)$ has probability zero. The reliabilities $\pi_e$ are
therefore reliabilities \emph{according to this model}, and need not equal the
experts' true probabilities of being correct. \Cref{app:support} reports a
control in which the support of every multiple-choice question is fixed to its
four options instead of being read off the answers.

The model assumes that, given $z$, the experts are
conditionally independent, so with a uniform prior over $\cand$ the verdict is
\begin{equation}
  \hat{y} \;=\; \operatorname*{arg\,max}_{v \in \cand} \; \sum_{e \in \experts} \log P(a_e \mid z = v).
  \label{eq:verdict}
\end{equation}

This is a restricted form of \citet{dawid1979em}: one reliability per expert
rather than a full confusion matrix, and a uniform error distribution over the
remaining candidates. We use one reliability per expert, avoiding the
$O(|\experts|\,|\cand|^{2})$ parameters of a full confusion-matrix model.

\paragraph{Estimating $\pi$ without labels.}
The reliabilities are estimated by EM. For a distribution $p_{e,q}$
over $\cand(q)$ under the likelihood of \Cref{prop:pool}, the M-step is
\begin{equation}
  \pi_e \;=\; \frac{1}{|Q_e|}\sum_{q \in Q_e}\;\sum_{v \in \cand(q)}
             \gamma_q(v)\, p_{e,q}(v),
  \label{eq:mstep-dist}
\end{equation}
where $\gamma_q(v)$ is the posterior of the E-step and $Q_e$ are the
questions $e$ answered. An expert's reliability is the posterior-weighted
mass it placed on each candidate, averaged over questions.
For point-mass evidence $p_{e,q} = \delta_{a_e}$, this reduces to
$\pi_e = |Q_e|^{-1}\sum_{q \in Q_e}\gamma_q(a_e)$.

Writing $A_e$ for the double sum in \eqref{eq:mstep-dist}, the expected complete-data log-likelihood of expert $e$ is
$A_e \log \pi_e + (|Q_e| - A_e) \log(1 - \pi_e)$; maximising gives $\pi_e = A_e / |Q_e|$.
%The derivation is one line: writing $A_e$ for the double sum in \eqref{eq:mstep-dist}, the expected complete-data log-likelihood of expert $e$ is $A_e \log \pi_e + (|Q_e| - A_e) \log(1 - \pi_e)$ up to terms free of $\pi_e$ --- the per-question $\log m_q$, which does not survive the derivative even though $m_q$ varies with the question --- because $\sum_v \gamma_q(v) = 1$ absorbs the off-diagonal mass; maximising gives $\pi_e = A_e / |Q_e|$.
\paragraph{Identifiability and optimisation.}
The EM updates explain how to estimate the reliabilities without labels. A separate question is whether the distribution of unlabelled responses determines them uniquely. Under task-specified support, as in \Cref{app:support}, the model yields the following label-free identifiability guarantee.
\begin{proposition}[Label-free identifiability]
\label[proposition]{prop:identifiability}
Fix a support size $K \ge 3$ that occurs with positive probability,
and condition on questions with $|V(q)|=K$.
Let $p_e$ be the empirical distribution of expert $e$'s answers
sampled according to (2) on that support.
Assume $(p_e)_{e\in\mathcal E}$ are jointly observed and
conditionally independent given $(z,q)$, with question-independent
reliabilities.
If at least three experts satisfy $\pi_e \neq 1/K$, then the
reliability vector $\pi$ is identifiable from the joint distribution
of the $(p_e)_{e\in\mathcal E}$ alone.
\end{proposition}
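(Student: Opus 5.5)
The plan is to recover $\pi$ from low-order cross-moments of the observed distributions, in the spirit of tensor/method-of-moments identifiability for Dawid--Skene. Since the $p_e$ are conditionally independent given $(z,q)$, every cross-moment between distinct experts factorises through the latent $z$. Sampling noise within an expert then never enters.

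First I reparametrise \eqref{eq:emission} on a support of size $K$. Since $p_e$ is an empirical frequency of draws from \eqref{eq:emission}, it is conditionally unbiased:
\[
\mathbb E[p_e\mid z] = \tfrac{1-\pi_e}{K-1}\mathbf 1 + \lambda_e\, \mathbf e_z,
\qquad
\lambda_e = \tfrac{K\pi_e-1}{K-1}.
\]
Thus $\pi_e\neq 1/K$ is exactly $\lambda_e\neq 0$, and $\pi_e=((K-1)\lambda_e+1)/K$. Identifying $\pi$ is therefore the same as identifying $\lambda$.

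For $e\neq f$, conditional independence gives
\[
\operatorname{Cov}(p_e,p_f)=\lambda_e\lambda_f\,\operatorname{Cov}(\mathbf e_z).
\]
For distinct $e,f,g$, the third joint cumulant is $\lambda_e\lambda_f\lambda_g\,\kappa_3(\mathbf e_z)$; the constant vectors drop out of all centred quantities. I will use the distribution of $z$ given the fixed support. Since candidates carry no intrinsic order within $\cand(q)$, the natural convention is a uniform prior, as in \eqref{eq:verdict}. Under that prior, contracting the diagonals gives two explicit scalars:
\[
t_{ef}:=\operatorname{tr}\operatorname{Cov}(p_e,p_f)=\lambda_e\lambda_f\,\tfrac{K-1}{K},
\]
\[
c_{efg}:=\sum_v \kappa_3(p_e,p_f,p_g)_{vvv}=\lambda_e\lambda_f\lambda_g\,\tfrac{(K-1)(K-2)}{K^2}.
\]
The second constant is non-zero precisely because $K\ge 3$, which is where that hypothesis enters.

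Now take three experts $e,f,g$ with non-zero $\lambda$.
\begin{enumerate*}[label=(\roman*)]
\item The magnitudes follow from the pairwise terms, e.g. $\lambda_e^2 = \tfrac{K}{K-1}\,t_{ef}t_{eg}/t_{fg}$.
\item The signs of $t_{ef},t_{eg},t_{fg}$ fix the relative signs of $\lambda_e,\lambda_f,\lambda_g$ up to a global flip.
\item The sign of $c_{efg}$ resolves that flip. This is the step that removes the label-switching ambiguity, so no expert needs to beat chance.
\end{enumerate*}
For any other expert $h$, including one with $\lambda_h=0$, we get $\lambda_h = \tfrac{K}{K-1}\,t_{he}/\lambda_e$. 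All these quantities are functionals of the joint law of $(p_e)_{e\in\experts}$, which proves identifiability.

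The main obstacle I expect is the prior on $z$. If it is not taken to be uniform, the scalars $s=\operatorname{tr}\operatorname{Cov}(\mathbf e_z)$ and the third-order constant become unknown. The fallback is to identify them jointly with $\lambda$. One route is to use the full matrices $\operatorname{Cov}(p_e,p_f)=\lambda_e\lambda_f S$, which give $S$ up to the scale $\lambda_e\lambda_f$. The other is to use the tensor $\lambda_e\lambda_f\lambda_g\,T$ and exploit that the prior $w$ determines both $S=\operatorname{diag}(w)-ww^\top$ and $T$. Pinning the scale then means solving for $w$ from the pair $(S/\alpha,\,T/\alpha^{3/2})$, and I would need to check that this has a unique solution. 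I would present the uniform-prior case as the main proof and note this extension as a remark.
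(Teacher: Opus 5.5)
Your algebra is right, but as written the argument proves the proposition only under a hypothesis the statement does not make. You need a uniform prior on $z$. Implicitly, you also need a fixed labelling of the $K$ candidates across questions: $\operatorname{Cov}(p_e,p_f)$ as a $K\times K$ matrix, and ``uniform over indices'', only make sense relative to such a labelling, yet $\cand(q)$ changes from question to question. Centring at the mean is what brings the law of $z$ into the moments. With a prior $w$, your scalars become $\lambda_e\lambda_f\sum_v w_v(1-w_v)$ and $\lambda_e\lambda_f\lambda_g\sum_v w_v(1-w_v)(1-2w_v)$, and the extension you sketch for that case is left unverified. The statement places no assumption on the law of the latent answer, and the paper's proof needs none. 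So the general case is exactly the part still missing, and the full-matrix fallback is not needed to close it.

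The missing idea is to centre at the known constant $1/K$ rather than at the mean. From your own display, $\mathbb E[p_e-\mathbf 1/K\mid z]=\lambda_e(\mathbf e_z-\mathbf 1/K)$. For every $z$, the vector $\mathbf e_z-\mathbf 1/K$ has squared entries summing to $\frac{K-1}{K}$ and cubed entries summing to $\frac{(K-1)(K-2)}{K^2}$. By conditional independence it follows that $\mathbb E\bigl[\sum_v (p_e(v)-\tfrac1K)(p_f(v)-\tfrac1K)\bigr]=\frac{K-1}{K}\lambda_e\lambda_f$. The analogous triple sum equals $\frac{(K-1)(K-2)}{K^2}\lambda_e\lambda_f\lambda_g$.

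Both identities hold whatever the distribution of $z$, even if it depends on $q$. Both are also invariant under relabelling the candidates within a question, so no global indexing is needed. These are exactly the paper's moments $A^2_{ij}$ and $A^3_{ijk}$. Under a uniform prior they coincide with your $t_{ef}$ and $c_{efg}$, so with this substitution your steps (i)--(iii) go through verbatim.

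The paper then finishes more directly. It reads off $s_k=\frac{K}{K-2}A^3_{ijk}/A^2_{ij}$ from any two other experts $i,j$ with nonzero $s$, which gives magnitude and sign in a single ratio. Your route instead separates magnitude, relative sign and global flip, and uses the third-order moment only through its sign.
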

The proof in \Cref{app:proofident} reconstructs each reliability from second and third order agreement moments. This guarantee holds even when every expert's probability of answering correctly is below  $1/K$, the probability of uniform random guessing. Even when the reliabilities are identifiable, EM may converge to a local rather than a global maximum of the non-concave likelihood.

We initialise $\pi_e = 0.6$ for every expert, giving all experts the same positive initial weight. We run a single EM fit with
no restarts, stop when no parameter moves by more than $10^{-7}$ or after $200$
iterations, and clip $\pi_e$ to $[0.02, 0.98]$ to keep the log-odds finite.
A $25$-start sensitivity analysis confirms that the reported
distributional fit is insensitive to this choice, as the highest-likelihood
distributional solution found on every split is the one reached by
the default start. The single-answer regime has an additional
higher-likelihood degenerate mode (\Cref{app:multistart}).
We fit $\pi$ on unlabelled training configurations and freeze it before
evaluation; no gold label is used during fitting.

\subsection{What the rule actually is}
\label{sec:rule}

\Cref{prop:reduce,prop:pool} characterise how \eqref{eq:verdict} reduces in
the two evidence regimes.

\begin{proposition}[Single answers: the log-odds weighted vote]
\label[proposition]{prop:reduce}
With one answer per expert and $m \ge 1$,
\begin{align*}
  \sum_e \log P(a_e \mid z = v) &\;=\; C \;+\!\! \sum_{e\,:\,a_e = v}\!\! w_e, \\
  w_e &\;=\; \log \frac{\pi_e\, m}{1 - \pi_e},
\end{align*}
with $C = \sum_e \log\frac{1-\pi_e}{m}$ independent of $v$. Hence
\eqref{eq:verdict} coincides with the weighted vote \eqref{eq:vote}.
\end{proposition}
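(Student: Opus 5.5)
The identity follows from splitting the log-likelihood \eqref{eq:verdict} according to whether each expert agrees with the hypothesised truth $v$, then recognising the result as the weighted vote \eqref{eq:vote}. Fix a question with $m = |\cand|-1 \ge 1$ and a candidate $v \in \cand$. Under the emission model \eqref{eq:emission}, each expert contributes $\log \pi_e$ if $a_e = v$ and $\log\frac{1-\pi_e}{m}$ otherwise. Since every $a_e$ lies in $\cand$ by construction of the candidate set, these two cases are exhaustive. So I will first write
\begin{equation*}
  \sum_e \log P(a_e \mid z = v) \;=\; \sum_{e\,:\,a_e = v} \log \pi_e \;+\; \sum_{e\,:\,a_e \neq v} \log\frac{1-\pi_e}{m}.
\end{equation*}

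Next I add and subtract $\sum_{e\,:\,a_e=v}\log\frac{1-\pi_e}{m}$. This turns the second sum into the full sum $C=\sum_{e}\log\frac{1-\pi_e}{m}$ over all experts, and leaves a residual term $\sum_{e\,:\,a_e=v}\bigl(\log\pi_e - \log\frac{1-\pi_e}{m}\bigr)$. That residual equals $\sum_{e\,:\,a_e=v} w_e$ with $w_e=\log\frac{\pi_e m}{1-\pi_e}$. Every term is finite because $\pi_e\in(0,1)$, which is guaranteed by the clipping to $[0.02,0.98]$. The constant $C$ depends on the question only through $m$ and the fixed $\pi$, not on $v$.

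Finally, subtracting a $v$-independent constant does not change an $\arg\max$ over $v\in\cand$. So \eqref{eq:verdict} equals $\arg\max_{v\in\cand}\sum_{e:a_e=v}w_e$, which is exactly \eqref{eq:vote} with these weights. The candidate set used in \eqref{eq:vote} is the same $\cand(q)$, and I would note that ties are broken identically under both rules. The only point requiring care, rather than a real obstacle, is the claim that $C$ is free of $v$. This holds because $m$ is a per-question constant, fixed once the support is fixed, and does not vary with the hypothesised $v$. I would also remark that $w_e<0$ exactly when $\pi_e<1/(m+1)$, which justifies allowing $w\in\R^{|\experts|}$ in \eqref{eq:vote}.
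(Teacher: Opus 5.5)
Your proposal is correct and follows the paper's own one-line proof: substitute the emission model \eqref{eq:emission}, then add and subtract the off-diagonal term $\log\frac{1-\pi_e}{m}$ to split off the $v$-independent constant $C$ from the log-odds weights $w_e$ of the experts with $a_e = v$. Your extra remarks (finiteness via clipping, and $w_e<0$ exactly when $\pi_e<1/|\cand|$) are accurate but not needed for the identity itself.
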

\begin{proof}
Substitute \eqref{eq:emission} and add and subtract $\sum_e \log\frac{1-\pi_e}{m}$.
\end{proof}

\Cref{prop:reduce} says that, with single answers, the jury is the
Nitzan-Paroush log-odds rule \citep{nitzan1982optimal} in the $K$-ary form of
\citet{beyondmajority2025}, with $\pi$ estimated unsupervised instead of read off
labels.

\begin{proposition}[Distributions: a signed linear fusion]
\label[proposition]{prop:pool}
Let each expert supply a distribution $p_e$ over $\cand$ and let its contribution
to the log-likelihood be $\alpha \sum_{u \in \cand} p_e(u) \log P(a_e = u \mid z = v)$,
with $\alpha > 0$ the evidence weight of \Cref{sec:ablations}. Then
\begin{align*}
  \hat{y} &\;=\; \operatorname*{arg\,max}_{v \in \cand} \; \sum_{e} w_e \, p_e(v), \\
  w_e &\;=\; \log \frac{\pi_e\, m}{1 - \pi_e}.
\end{align*}
\end{proposition}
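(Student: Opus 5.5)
The plan is to follow the proof of \Cref{prop:reduce} and apply it to each expert's distributional contribution, using linearity in $p_e$. Fix an expert $e$ and a candidate $v \in \cand$. By \eqref{eq:emission}, $\log P(a_e = u \mid z = v)$ takes only two values: $\log \pi_e$ when $u = v$, and $\log\frac{1-\pi_e}{m}$ when $u \neq v$ (well defined since $m \ge 1$; the unanimous case $m=0$ is excluded exactly as in \Cref{prop:reduce}). The contribution of $e$ therefore splits as
\begin{equation*}
  \alpha\Bigl[\,p_e(v)\log \pi_e + \bigl(1 - p_e(v)\bigr)\log\tfrac{1-\pi_e}{m}\Bigr],
\end{equation*}
where the off-diagonal mass has been collapsed using $\sum_{u \in \cand} p_e(u) = 1$. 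This normalisation is the one place where the hypothesis that $p_e$ is a distribution supported on $\cand$ is used. A non-answer is omitted from the likelihood, so each $p_e$ is renormalised on $\cand$.

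Next, we add and subtract as in \Cref{prop:reduce}. The bracket equals
\[
  \log\tfrac{1-\pi_e}{m} + p_e(v)\log\frac{\pi_e m}{1-\pi_e} = C_e + w_e\,p_e(v),
\]
where $C_e$ does not depend on $v$. Summing over $e$, the total score for $v$ is $\alpha\bigl(C + \sum_e w_e p_e(v)\bigr)$ with $C = \sum_e C_e$ independent of $v$.

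Finally, the map $x \mapsto \alpha(C + x)$ is strictly increasing because $\alpha > 0$, so the $\arg\max$ over $v \in \cand$ is unchanged. This gives the stated rule, and point masses $p_e = \delta_{a_e}$ recover \Cref{prop:reduce}.

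No step is a real obstacle. The only care needed is bookkeeping: $m$ must be the common $|\cand|-1$ for all experts on the question, so that $C$ is truly $v$-free, and the clipping of $\pi_e$ to $[0.02, 0.98]$ keeps every $w_e$ finite. The word \emph{signed} needs no extra argument. It only records that $w_e < 0$ whenever $\pi_e < 1/(m+1)$.
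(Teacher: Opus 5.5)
Your proof is correct and is essentially the paper's own argument: collapse the off-diagonal mass into $(1-p_e(v))\log\frac{1-\pi_e}{m}$, rewrite each expert's term as $C_e + w_e\,p_e(v)$ with $C_e$ independent of $v$, sum over experts, and drop the constant and the positive factor $\alpha$. Your side remark that each $p_e$ is renormalised after non-answers are omitted is not needed, and the paper does not state it. The argument only requires the total mass $\sum_u p_e(u)$ to be independent of $v$, which holds trivially.
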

\begin{proof}
$\sum_u p_e(u) \log P(a_e = u \mid z=v) = p_e(v)\log\pi_e + (1 - p_e(v))
\log\frac{1-\pi_e}{m} = C_e + p_e(v)\, w_e$ with $C_e$ independent of $v$; sum
over $e$, drop the constant, and drop the common factor $\alpha$, which leaves
the argmax unchanged.
\end{proof}

% The score of \Cref{prop:pool} is an inner product between a weight vector and
% the experts' distributions, with the same log-odds weights as
% \Cref{prop:reduce}. When every $w_e \ge 0$ and at least one $w_e > 0$,
% dividing by $\sum_e w_e$ --- which leaves the argmax unchanged --- makes it the
% linear opinion pool of \citet{stone1961opinion}, whose value is a distribution
% over $\cand$. Log-odds weights need not be non-negative, so in general the
% object is a \emph{signed} linear fusion whose score is a ranking function and
% not a probability, normalised or otherwise. The posterior $\gamma$ of
% \eqref{eq:verdict} is a distribution either way --- it is the score that need
% not be.

\begin{remark}
\Cref{prop:pool} keeps the rule within weighted linear fusion. It remains an
inner product between a weight vector and per-candidate
evidence. Unlike voting over \emph{argmax} answers, however, a candidate can win
without being any expert's top choice, so the reachable set can be strictly
larger. Together the two propositions show that the method's learned content is
the reliability vector $\pi$, which determines the expert-specific component of
$w$, making the same fusion rule with label-fitted weights the natural
supervised baseline.
\end{remark}

\begin{remark}
The weights are not entirely fixed in advance. Since $m = |\cand| - 1$ depends on
the answers observed, $w_e = \log\frac{\pi_e}{1-\pi_e} + \log m$ carries a
question-level offset shared by every expert, contributing
$\log m \sum_e p_e(v)$ to the score --- a multiple of the equal-weight pool. Only
the per-expert part is fixed before the answers are seen. Note also that $w_e < 0$
whenever $\pi_e < 1/|\cand|$, in which case the expert is voted against rather
than merely ignored. The fusion score ranks candidates but need not be a probability,
the posterior $\gamma_q$ of \eqref{eq:verdict} remains a probability distribution either way.
\end{remark}

\Cref{fig:jury} puts these pieces together on a toy example: the emission model,
the label-free fit of $\pi$, the log-odds weights and the signed linear fusion.

\begin{figure}[tb]
  \centering
  \resizebox{0.9\textwidth}{!}{%
    % Input-ready TikZ figure; no standalone document wrapper.
% Main-document preamble: \usepackage{tikz}
%                         \usetikzlibrary{arrows.meta}

\begingroup
\definecolor{ink}{HTML}{222222}
\definecolor{muted}{HTML}{555555}
\definecolor{rule}{HTML}{B8B6B1}
\definecolor{warm}{HTML}{F3F1EC}
\definecolor{juryink}{HTML}{4B428B}
\definecolor{juryfill}{HTML}{F1EFFA}
\definecolor{wrongink}{HTML}{8D3434}
\definecolor{wrongfill}{HTML}{FBEFEF}
\definecolor{correctink}{HTML}{426430}
\definecolor{correctfill}{HTML}{EFF5E8}
\newcommand{\juryfigfont}{\fontsize{8.6}{10.6}\selectfont}
\newcommand{\jurysmallfigfont}{\fontsize{8.1}{9.8}\selectfont}
\newcommand{\jurytitlefont}{\fontsize{9.6}{11.6}\selectfont\bfseries}

\begin{tikzpicture}[
  x=1cm,y=1cm,
  every node/.style={inner sep=0pt,outer sep=0pt,text=ink,font=\juryfigfont},
  panel/.style={draw=rule,line width=.45pt,rounded corners=2.5pt,fill=white},
  flow/.style={-{Stealth[length=3.3pt,width=3.3pt]},draw=muted,line width=.65pt}
]
  \path[use as bounding box] (0,0) rectangle (17.4,4.5);

  % A shared input, the choice of weights, and a shared scoring rule.
  \draw[panel] (0.02,0.14) rectangle (4.92,4.36);
  \draw[panel] (5.52,0.14) rectangle (11.88,4.36);
  \draw[panel] (12.48,0.14) rectangle (17.38,4.36);
  \draw[flow] (4.98,2.25) -- (5.44,2.25);
  \draw[flow] (11.94,2.25) -- (12.40,2.25);

  % Same answer distributions: bold marks each expert's top answer.
  \node[font=\jurytitlefont] at (2.47,4.03) {Same answer distributions};
  \node[text=muted,font=\jurysmallfigfont] at (2.47,3.57)
    {One question; five experts};

  \draw[draw=ink,line width=.55pt] (0.24,3.31) -- (4.70,3.31);
  \draw[draw=ink,line width=.35pt] (0.24,2.89) -- (4.70,2.89);
  \draw[draw=ink,line width=.55pt] (0.24,0.71) -- (4.70,0.71);
  \node at (0.70,3.10) {Expert};
  \node at (1.81,3.10) {$p_e(A)$};
  \node at (2.95,3.10) {$p_e(B)$};
  \node at (4.10,3.10) {$p_e(C)$};
  \foreach \e/\y/\a/\b/\c in {
    1/2.69/{\textbf{0.70}}/{0.20}/{0.10},
    2/2.25/{\textbf{0.60}}/{0.25}/{0.15},
    3/1.81/{0.25}/{\textbf{0.55}}/{0.20},
    4/1.37/{0.15}/{\textbf{0.60}}/{0.25},
    5/0.93/{0.20}/{\textbf{0.50}}/{0.30}}{
    \node at (0.70,\y) {\e};
    \node at (1.81,\y) {\a};
    \node at (2.95,\y) {\b};
    \node at (4.10,\y) {\c};
  }
  \node[text=muted,font=\jurysmallfigfont] at (2.47,0.44)
    {Bold: each expert's top answer};

  % Only the weighting changes.
  \node[font=\jurytitlefont] at (8.70,4.03) {Only the weights change};
  \draw[draw=none,fill=warm,rounded corners=2pt]
    (5.70,3.22) rectangle (11.70,3.77);
  \node at (8.70,3.49) {\textbf{Electorate:} equal weights, $w_e=1$};

  \draw[draw=none,fill=juryfill,rounded corners=2pt]
    (5.70,0.32) rectangle (11.70,3.04);
  \node[text=juryink,font=\juryfigfont\bfseries] at (8.70,2.80) {Jury: EM, no labels};
  \node[text=muted,font=\jurysmallfigfont] at (8.70,2.41)
    {Fit on unlabelled training data, then freeze $\pi$};
  \node at (8.70,1.93)
    {$w_e=\log\!\dfrac{\pi_e\,m}{1-\pi_e},\qquad m=2$};

  \node at (6.35,1.42) {Expert};
  \node at (6.35,0.94) {$\pi_e$};
  \node at (6.35,0.54) {$w_e$};
  \draw[draw=juryink,line width=.35pt] (5.99,1.21) -- (11.40,1.21);
  \foreach \e/\x/\rel/\weight in {
    1/7.40/{0.75}/{1.79},
    2/8.32/{0.70}/{1.54},
    3/9.24/{0.40}/{0.29},
    4/10.16/{0.35}/{0.07},
    5/11.08/{0.20}/{$\mathbf{-0.69}$}}{
    \node at (\x,1.42) {\e};
    \node at (\x,0.94) {\rel};
    \node at (\x,0.54) {\weight};
  }

  % The same fusion rule, with the two sets of weights.
  \node[font=\jurytitlefont] at (14.93,4.03) {Same linear fusion};
  \node at (14.93,3.48)
    {$\mathrm{score}(v)=\sum_e w_e\,p_e(v)$};

  \draw[draw=ink,line width=.55pt] (12.80,3.17) -- (17.06,3.17);
  \draw[draw=ink,line width=.35pt] (12.80,2.71) -- (17.06,2.71);
  \draw[draw=ink,line width=.55pt] (12.80,1.48) -- (17.06,1.48);
  \node at (13.49,2.93) {Candidate};
  \node at (15.14,2.93) {Electorate};
  \node at (16.50,2.93) {Jury};
  \foreach \candidate/\y/\equal/\jury in {
    A/2.48/{1.90}/{\textbf{2.12}},
    B/2.08/{\textbf{2.10}}/{0.60},
    C/1.68/{1.00}/{0.28}}{
    \node at (13.49,\y) {\candidate};
    \node at (15.14,\y) {\equal};
    \node at (16.50,\y) {\jury};
  }
  \draw[draw=wrongink,line width=.4pt,fill=wrongfill,rounded corners=2pt]
    (12.70,0.95) rectangle (17.16,1.36);
  \node[text=wrongink] at (14.93,1.155)
    {\textbf{Electorate: B}\quad wrong};
  \draw[draw=correctink,line width=.4pt,fill=correctfill,rounded corners=2pt]
    (12.70,0.35) rectangle (17.16,0.76);
  \node[text=correctink] at (14.93,0.555)
    {\textbf{Jury: A}\quad correct};

\end{tikzpicture}

\endgroup%
  }
\caption{Toy example with five experts and three candidates. The gold answer is A, but three experts favour B. Both rules use the same answer distributions; only the weights differ. The equal-weight pool selects B, while the jury estimates $\pi$ from unlabelled training configurations using \eqref{eq:mstep-dist} under \eqref{eq:emission}, freezes it before test time, and selects A through the same linear fusion (\Cref{prop:pool}). The numbers are illustrative.}
  \label{fig:jury}
\end{figure}

% \begin{figure}[t]
%   \centering
%   \input{jury_vs_electorate_horizontal.tex}
%   \caption{A toy example in which most experts favour the wrong answer. Five experts
%   spread their probability over three candidates: the gold answer is A, but three of
%   them favour B. Both columns read the same distributions and differ only in where
%   the weights come from. The poll (left) gives every expert the same weight,
%   and the pool follows the majority to B. The jury (right) models the answers by \eqref{eq:emission} and fits $\pi$ by
%   \eqref{eq:mstep-dist} on unlabelled training configurations, frozen
%   before this question is seen. It turns $\pi$ into log-odds weights, negative only
%   for the expert below $1/|\cand|$, and ranks the candidates by
%   $\sum_e w_e\,p_e(v)$ (\Cref{prop:pool}). The numbers are illustrative.}
%   \label{fig:jury}
% \end{figure}

\subsection{Modelling choices}
\label{sec:ablations}

\paragraph{Evidence weight.}
Let $p_{e,q}$ be the empirical distribution obtained from $n$ samples of expert
$e$. We weight its log-likelihood contribution by $\alpha$. Setting
$\alpha = n$ recovers the multinomial log-likelihood, up to
a term independent of the latent answer and the reliabilities. We therefore use
$\alpha = n$ as the default: $\alpha = 8$ for the sampled distributions, since
the cache holds eight samples per expert and question (\Cref{sec:setup}), and
$\alpha = 1$ for a single greedy answer. As an ablation, we also consider
$\alpha = 1$ for a sampled distribution, which treats the whole distribution as
one unit of evidence (see \Cref{app:ablations}). At fixed $\pi$, $\alpha$ does not change the argmax in
\Cref{prop:pool}, because it multiplies every candidate score by the same
positive constant. It does, however, change the E-step distribution $\gamma$,
and therefore the fitted reliabilities.

\Cref{eq:emission} assumes that experts are conditionally independent given the
latent answer. To test sensitivity to shared failure modes, we also consider an
extension with a second latent variable per question, a \emph{lure}: a wrong
answer toward which a specified group of experts may be jointly drawn. Each
expert has a susceptibility parameter controlling its response to the lure. We
consider groups defined by pre-training lineage and a pool-wide group (see \Cref{app:ablations}); the
independent model remains the default.

% ===========================================================================
\section{Results}
\label{sec:results}
% ===========================================================================

This section tests the jury empirically.
\Cref{sec:setup} describes the benchmark and the evaluation setup.
\Cref{sec:comparison} compares the jury with label-free and supervised baselines and the best single expert, separates the gains from answer distributions and expert weighting, and reports robustness and compute.
\Cref{sec:ablresults} examines the modelling choices of \Cref{sec:ablations} and calibration.
Further experimental details and analyses are in \Cref{app:setup,app:robustness,app:ablations}.

\subsection{Setup}
\label{sec:setup}

\paragraph{Pool.}
Our pool contains nine instruction-tuned experts of up to 3B parameters, selected to vary in size, domain, and model
family. The goal is to include models that fail in different ways, giving the combiner useful disagreement to exploit. \texttt{qwen3b} is the strongest expert overall ($75.0\%$) and on multiple choice, while \texttt{olmo2-1b} leads on GSM8K despite being one of the weakest experts overall. The full model list and individual accuracies are given in \Cref{app:setup}, \Cref{tab:experts}.

\paragraph{Data.}
The benchmark has $1560$ questions from four sources: $240$ from MMLU
\cite{hendrycks2021mmlu} ($40$ from each of six subjects spanning humanities,
law and science), $120$ GSM8K \cite{cobbe2021gsm8k} problems, the only
generative slice, $600$ ARC-Challenge \cite{clark2018arc} and $600$ HellaSwag
\cite{zellers2019hellaswag} questions. Every source is scored $0/1$ against a
single gold answer %, which the model of \eqref{eq:emission} requires
(for more details see \Cref{app:setup}).

\paragraph{Answers and repeated sampling.}
Multiple choice is scored in one forward pass from the log-probabilities of the
four option letters; GSM8K is answered by chain-of-thought generation, from
which the final number is extracted (\Cref{app:setup}). Besides one greedy answer
per expert, the cache holds eight samples per expert per question ($T = 0.7$,
$p = 0.95$), \ie self-consistency sampling \cite{wang2023selfconsistency}
applied per expert. These samples give the distribution $p_e$ that
\Cref{prop:pool} consumes. On multiple choice they are draws from a
distribution that the single forward pass has already computed, so they cost
nothing extra.

\paragraph{Protocol and statistics.}
Every method is evaluated over ten stratified $50/50$ train/test splits of the same
cached answers; methods that need labels see only the training half. The jury is
fitted on the unlabelled answer configurations of the same training half;
fitting it on all configurations would leak no label, but would not be a matched
comparison. We report the
mean and standard deviation over splits. The ten test sets overlap heavily, so a
paired $t$-test over splits would overstate the evidence
\cite{dietterich1998tests,nadeau2003inference}. All comparisons therefore use a
paired bootstrap over \emph{questions} with $20{,}000$ resamples, where a
question's score for a method is its mean correctness over the splits in which it
was held out. Compute is the average number of parameters activated per question,
which charges a router for the one expert it calls and a fusion rule for all nine.

\paragraph{The supervised linear fusion.}
The supervised linear fusion uses the fusion rule of \Cref{prop:pool}
with its reliabilities read off labels. Its reliability for expert $e$
is the accuracy of $e$'s greedy answer on the training half. The
comparison with the jury is between two ways of estimating the
reliabilities of one rule, not against every way of fitting weights
from labels. The two estimators also differ in three implementation
conventions (see \Cref{app:supervised}).

\subsection{The comparison}
\label{sec:comparison}

\Cref{tab:main} is the main comparison: ten post-hoc combiners re-implemented on this cache and run on the same ten splits with the same experts.
%\Cref{tab:main} is the main comparison: ten post-hoc combiners re-implemented on
%this cache and run on the same ten splits with the same experts.
%\Cref{fig:frontier} plots accuracy against compute.

\begin{table}[tb]
\centering
\small
\setlength{\tabcolsep}{3.0pt}
\resizebox{\linewidth}{!}{\begin{tabular}{>{\raggedright\arraybackslash}p{5.2cm}cccrrr}
\toprule
Method & Labels & Params (B) & Accuracy & Plurality & Minority & $\Delta$ vs.\ jury~[95\% CI] \\
\midrule
Best single expert & -- & -- & 75.0 $\pm$ 1.0 & -- & -- & +2.0$^{*}$~[+0.5, +3.6] \\
\quad + self-consistency on GSM8K & -- & -- & 75.3 $\pm$ 1.1 & -- & -- & +1.6$^{*}$~[+0.0, +3.1] \\
\midrule
Majority vote~\cite{li2024moreagents} & no & 14.4 & 63.0 $\pm$ 1.6 & 100.0 & 6.2 & +14.1$^{*}$~[+12.0, +16.3] \\
Inverse Surprising Popularity~\cite{prelec2017surprising,beyondmajority2025} & no & 14.4 & 69.1 $\pm$ 1.6 & 98.5 & 27.4 & +7.9$^{*}$~[+6.2, +9.6] \\
Smoothie-Global~\cite{guha2024smoothie} & no & 1.5 & 66.3 $\pm$ 1.3 & 85.3 & 43.1 & +10.5$^{*}$~[+8.6, +12.4] \\
Smoothie-Local~\cite{guha2024smoothie} & no & 2.0 & 64.2 $\pm$ 1.6 & 83.4 & 40.3 & +12.7$^{*}$~[+11.1, +14.4] \\
Optimal Weight~\cite{nitzan1982optimal,beyondmajority2025} & yes & 14.4 & 73.0 $\pm$ 1.4 & 95.9 & 43.9 & +3.9$^{*}$~[+2.7, +5.2] \\
LLM-TOPLA~\cite{tekin2024topla} & yes & 4.5 & 73.9 $\pm$ 1.5 & 89.3 & 58.5 & +2.9$^{*}$~[+1.4, +4.5] \\
MoRE~\cite{si2023more} & yes & 14.4 & 74.6 $\pm$ 1.2 & 93.2 & 53.4 & +2.4$^{*}$~[+1.4, +3.6] \\
MoRE + repeated sampling~\cite{si2023more} & yes & 22.2 & 76.4 $\pm$ 1.3 & 93.3 & 58.5 & +0.5~[-0.4, +1.5] \\
Equal-weight pool~\cite{stone1961opinion} & no & 22.2 & 71.4 $\pm$ 1.8 & 96.8 & 37.2 & +5.5$^{*}$~[+4.1, +7.0] \\
Supervised linear fusion~\cite{stone1961opinion,beyondmajority2025} & yes & 22.2 & 76.2 $\pm$ 1.4 & 95.6 & 53.5 & +0.6$^{*}$~[+0.1, +1.2] \\
\midrule
Answer scoring, discriminative & yes & 14.4 & 76.2 $\pm$ 1.3 & 93.1 & 58.7 & +0.7~[-0.4, +1.9] \\
\quad + repeated sampling & yes & 22.2 & 76.7 $\pm$ 1.1 & 92.9 & 60.3 & +0.1~[-1.0, +1.2] \\
\textbf{Jury (ours), greedy answers} & \textbf{no} & 14.4 & 72.3 $\pm$ 1.1 & 94.0 & 45.3 & +4.6$^{*}$~[+3.3, +5.9] \\
\textbf{\quad + repeated sampling} & \textbf{no} & 22.2 & 76.9 $\pm$ 1.3 & 95.7 & 55.5 &  \\
\midrule
Oracle & -- & -- & 94.3 $\pm$ 0.5 & -- & -- & \\
\bottomrule
\end{tabular}
}
\caption{Methods evaluated on the same pool, cache, and ten splits.
With \emph{self-consistency on GSM8K}, the best single expert predicts by taking the mode of the same eight samples used by the distributional methods.
The equal-weight pool and supervised linear fusion use answer
distributions; the latter fits log-odds weights from labels
(\Cref{prop:pool}). \emph{Plurality} denotes questions where the gold answer is the unique plurality winner; \emph{Minority} denotes those where at least one expert gives the gold answer but it is not the unique plurality winner, including ties. $\Delta$ is the paired difference from the jury with repeated
sampling, with a $95\%$ bootstrap interval; $^{*}$ marks $p<0.05$. Accuracies
are means over the ten splits, whereas $\Delta$ averages per question first, so
the two can differ by a few tenths. Implementation details in supplementary
material.}
%\caption{Post-hoc combiners on the same pool, cache and ten splits.
%The equal-weight pool and the supervised linear fusion are both over the answer
%distributions. The first is a linear opinion pool in Stone's sense; the second
%has log-odds weights fitted on labels, which go negative here and make it a
%signed fusion (\Cref{prop:pool}). \emph{Plurality} and \emph{Minority} report accuracy on question
%subsets defined by the experts' greedy answers: for \emph{Plurality},
%the gold answer is the unique plurality winner; for \emph{Minority},
%at least one expert proposes it, but it is not the unique plurality
%winner (including ties for first place). $\Delta$ is the paired difference against the jury with
%repeated sampling, with a $95\%$ bootstrap interval over questions; $^{*}$ marks
%$p < 0.05$. Implementation adaptations required to apply published methods to
%our benchmark are documented in \texttt{scripts/aggregator\_vs\_literature.py}
%in the supplementary material.}
\label{tab:main}
\end{table}

\paragraph{Two separable gains.}
Majority voting is the simplest form of model aggregation:
every expert casts one equally weighted vote. Over all nine experts it scores
$63.0 \pm 1.6\%$, twelve points below the best single expert. Replacing the
argmax answers with the answer distributions, at equal weights, gives
$71.4 \pm 1.8\%$: a $+8.4$-point gain from keeping the distributions, with no
weights and no labels. Weighting the experts by log-odds adds a further $+5.5$
points, reaching $76.9 \pm 1.3\%$. The first gain is available to any method;
the second is where the jury operates, since it must learn from the testimony
alone which witnesses to trust.

\Cref{tab:decomp} shows that the first gain comes from the distribution
not the extra samples. Its middle row is a sample-matched control that
uses the same eight samples per expert, at the same cost, but reduces them to
their mode before the rule sees them.

The two effects are separable but interact:
weighting improves every evidence regime, and retaining the full distribution
rather than its sample-matched mode improves every weighting scheme, by $2.2$
to $8.3$ points.

\begin{table}[tb]
\centering
\small
\resizebox{\linewidth}{!}{\begin{tabular}{lrrrr}
\toprule
& & \multicolumn{3}{c}{Weights over the experts} \\
\cmidrule(lr){3-5}
Evidence the rule is given & Params (B) & Equal & Log-odds, supervised & Log-odds, label-free \\
\midrule
Greedy answer, 1 per expert & 14.4 & 63.0 $\pm$ 1.6 & 73.0 $\pm$ 1.4 & 72.3 $\pm$ 1.1 \\
Mode of 8 samples & 22.2 & 63.1 $\pm$ 1.8 & 73.4 $\pm$ 1.4 & 74.7 $\pm$ 1.4 \\
Full 8-sample distribution & 22.2 & 71.4 $\pm$ 1.8 & 76.2 $\pm$ 1.4 & 76.9 $\pm$ 1.3 \\
\midrule
\emph{Distribution minus mode} & & +8.3 & +2.8 & +2.2 \\
\emph{Mode minus greedy} & & +0.1 & +0.4 & +2.3 \\
\bottomrule
\end{tabular}
}
\caption{What the rule is allowed to see, against how it weights the experts. The
middle row is the sample-matched control: it pays for the eight samples in full
and then discards everything but their mode. Every entry is the mean over the same
ten splits, $\pm$ one standard deviation. Differences are computed before
rounding.}
\label{tab:decomp}
\end{table}

\paragraph{Label-free weights match supervised ones.}
The supervised linear fusion has the same functional form as the jury, with
weights fitted on the labels of the training half. It reaches
$76.2 \pm 1.4\%$, and the jury is $+0.6$ points above it
($[+0.1, +1.2]$, $p = 0.02$). Conditional on the ten fitted models the
difference is small but significant in the jury's favour. Once the variability
of refitting is included as well, the nested bootstrap of \Cref{app:refit} gives
a difference of $+0.6$ points with a $95\%$ interval of $[-0.6, +2.0]$:
no statistically significant difference, and an interval that admits
differences of either sign, so not a demonstration of equivalence. On this benchmark, against this baseline, we find no evidence
that supervised weight estimation outperforms label-free estimation.

Further supervised comparisons and per-task results are reported in
\Cref{app:supervised,app:refit,app:spec}.

\paragraph{Label-free comparisons.}
On the full benchmark, the greedy jury scores $72.3\%$, versus $69.1\%$ for ISP
\cite{beyondmajority2025}, and with distributions it reaches $76.9\%$, $+5.5$
over the equal-weight pool ($[+4.1, +7.0]$) and $+7.9$ over ISP
($[+6.2, +9.6]$). On the $1440$ fixed-label questions, SkillAggregation
\cite{sun-etal-2025-skillaggregation} reaches $73.2\%$ with distributions
without a labelled development set; even
its test-selected upper reference, SkillAggregation-X, reaches $74.0\%$, $2.7$
points below the jury at $76.6\%$. OW-L and OW-I \cite{beyondmajority2025} reach $71.3\%$
and $71.6\%$ on greedy answers, versus $72.4\%$ for the greedy jury, with
intervals including zero (\Cref{app:skillagg}). Most of the gain comes from
the minority stratum, where the gold answer is not the unique plurality winner: accuracy
rises from $6.2\%$ for majority voting to $55.5\%$ for the jury, while the jury
also reaches $95.7\%$ on plurality questions.

\paragraph{Best single expert.}
\texttt{qwen3b} is the strongest expert overall ($75.0\%$), and the jury
reaches $76.9\%$, $+2.0$ ($[+0.5, +3.6]$, $p = 0.012$). Giving every expert the
same eight-sample self-consistency on GSM8K and selecting on the training half
again chooses \texttt{qwen3b} on every split, at $75.3\%$; the jury remains
$+1.6$ ahead ($[+0.0, +3.1]$, $p = 0.048$). The gain is concentrated on GSM8K,
where \texttt{qwen3b} is weakest and \texttt{olmo2-1b} is the strongest expert;
there the jury gains $20$ points, while on the sources \texttt{qwen3b} already
dominates it essentially matches that expert (\Cref{app:bysource}).

\paragraph{Robustness.}
Nested resampling leaves the main conclusions unchanged: the jury remains
above the equal-weight pool ($+5.5$, $[+2.9, +8.3]$) and majority vote
($+14.1$, $[+10.5, +17.8]$), while its $+0.6$ margin over the supervised linear
fusion is not significant once refitting is resampled as well ($[-0.6, +2.0]$;
\Cref{app:refit}).

\paragraph{Across sub-pools.}
Across $246$ sub-pools, the jury beats the equal-weight pool on $79\%$, $98\%$
and $100\%$ of the three, five and seven expert pools, and the best expert on
$82\%$, $96\%$ and $100\%$, respectively; the advantage is therefore not
specific to the full nine-expert pool. \Cref{app:pools} gives the full
breakdown and the $\alpha = 1$ degeneracies.

\paragraph{Cost.}
Sampled variants activate $22.2$B parameters per question versus $14.4$B for
greedy variants, because only GSM8K re-runs the models; \Cref{fig:frontier} in
the appendix plots accuracy against cost.

\subsection{Model checks and calibration}
\label{sec:ablresults}

\Cref{app:ablations} reports the full checks. Experts agree on wrong answers
more often than independence predicts, but the correlated-error extension
slightly lowers accuracy, and selective override does not help. The posterior
is strongly overconfident (mean top-1 confidence $0.991$ versus $76.9\%$
accuracy; expected calibration error, computed over ten equal-width confidence bins, is 0.223) and requires recalibration before use as a confidence
score.
%Expected calibration error (ECE), computed over ten equal-width confidence bins, is 0.223.

\section{Conclusion}
% ===========================================================================

We asked whether an ensemble of language models can work like a jury: infer
credibility from testimony rather than be told it in advance.
Keeping experts' answer distributions instead of only their argmax improves majority
voting by $8.4$ points, and weighting experts adds a further $5.5$ points.
Our contribution is to the second: showing that a simple generative model \citep{dawid1979em}, fitted by EM on unlabelled answers,
learns expert weights without reading a gold answer.

The resulting jury reaches $76.9 \pm 1.3\%$, beating the pool's best greedy expert by $2.0$ points, the same expert with eight-sample self-consistency on GSM8K by $1.6$ points, and the
strongest label-free alternative on the full benchmark by $5.5$ points, while performing comparably to the supervised linear fusion, the same rule with weights fitted from labels, with a small edge that is significant conditional on the
fitted models but not once the variability of refitting is included. The method therefore provides a label-free estimator of expert
weights for linear fusion.

% On this pool, supervising the expert weights provides no measurable accuracy advantage, and the same qualitative
% pattern holds on average across the sub-pools of five or more experts drawn from it. The result suggests a
% simple alternative to learning whom to trust from an answer key: let the jury infer it from the testimony itself.

\section{AI usage}
In this work, we used generative AI tools (Codex, Claude) to assist with: \textbf{(i)} checking algebraic steps in the proofs, after the propositions and proof strategies had been developed by the authors; \textbf{(ii)} supporting the implementation of the evaluation code; \textbf{(iii)} editing the manuscript to improve readability; and \textbf{(iv)} identifying relevant literature.

% *ACL papers carry an unnumbered Limitations section after the conclusion
% and before the references.
\section*{Limitations}

Our evaluation uses a fixed pool of nine models and 1,560 questions, mostly multiple-choice.
We test every subset of three, five, and seven experts, providing extensive coverage of combinations
within this pool, but not of different underlying models.
Most of the gain over the best single expert comes from GSM8K.
The absence of a significant supervised advantage under the nested bootstrap does not establish
equivalence or generalisation beyond the evaluated setting.

The jury requires outputs from every expert and repeated sampling for generative tasks,
increasing inference cost. The comparison with the best greedy expert therefore measures the benefit
of the complete system, including additional sampling, rather than aggregation alone.

\begingroup
\raggedright
\setlength{\bibsep}{1pt plus 1pt}
\bibliographystyle{plainnat}
\bibliography{jury}

\begin{thebibliography}{54}
\providecommand{\natexlab}[1]{#1}
\providecommand{\url}[1]{\texttt{#1}}
\expandafter\ifx\csname urlstyle\endcsname\relax
  \providecommand{\doi}[1]{doi: #1}\else
  \providecommand{\doi}{doi: \begingroup \urlstyle{rm}\Url}\fi

\bibitem[Ai et~al.(2026)Ai, Pan, Simchi-Levi, Tambe, and Xu]{beyondmajority2025}
Rui Ai, Yuqi Pan, David Simchi-Levi, Milind Tambe, and Haifeng Xu.
\newblock Beyond majority voting: {LLM} aggregation by leveraging higher-order information.
\newblock In \emph{Proceedings of the 43rd International Conference on Machine Learning}, volume 306 of \emph{Proceedings of Machine Learning Research}, pages 1311--1340. PMLR, 2026.
\newblock URL \url{https://proceedings.mlr.press/v306/ai26d.html}.

\bibitem[Chen et~al.(2026{\natexlab{a}})Chen, Xun, Zhou, Qi, Zhang, Zhang, Chen, Hu, Qu, and Hu]{modelswitch2025}
Jianhao Chen, Zishuo Xun, Bocheng Zhou, Han Qi, Hangfan Zhang, Qiaosheng Zhang, Yang Chen, Wei Hu, Yuzhong Qu, and Shuyue Hu.
\newblock Do we truly need so many samples? multi-llm repeated sampling efficiently scales test-time compute.
\newblock In \emph{Proceedings of the AAAI Conference on Artificial Intelligence}, volume~40, pages 20083--20091, 2026{\natexlab{a}}.
\newblock \doi{10.1609/aaai.v40i24.39094}.

\bibitem[Chen et~al.(2024)Chen, Zaharia, and Zou]{chen2023frugalgpt}
Lingjiao Chen, Matei Zaharia, and James Zou.
\newblock Frugal{GPT}: How to use large language models while reducing cost and improving performance.
\newblock \emph{Transactions on Machine Learning Research}, 2024.
\newblock ISSN 2835-8856.
\newblock URL \url{https://openreview.net/forum?id=cSimKw5p6R}.
\newblock Featured Certification.

\bibitem[Chen et~al.(2026{\natexlab{b}})Chen, Lu, Li, Chen, Li, Sun, Luo, Mao, Li, Xiao, Yang, Ban, and Sun]{chen2025llmensemble}
Zhijun Chen, Xiaodong Lu, Jingzheng Li, Pengpeng Chen, Zhuoran Li, Kai Sun, Yuankai Luo, Qianren Mao, Ming Li, Likang Xiao, Dingqi Yang, Yikun Ban, and Hailong Sun.
\newblock Harnessing multiple large language models: A survey on llm ensemble.
\newblock In Diego Calvanese, editor, \emph{Proceedings of the Thirty-Fifth International Joint Conference on Artificial Intelligence, {IJCAI-26}}, pages 7807--7816. International Joint Conferences on Artificial Intelligence Organization, 8 2026{\natexlab{b}}.
\newblock \doi{10.24963/ijcai.2026/867}.
\newblock URL \url{https://doi.org/10.24963/ijcai.2026/867}.
\newblock Survey Track.

\bibitem[Clark et~al.(2018)Clark, Cowhey, Etzioni, Khot, Sabharwal, Schoenick, and Tafjord]{clark2018arc}
Peter Clark, Isaac Cowhey, Oren Etzioni, Tushar Khot, Ashish Sabharwal, Carissa Schoenick, and Oyvind Tafjord.
\newblock Think you have solved question answering? try arc, the ai2 reasoning challenge, 2018.
\newblock URL \url{https://arxiv.org/abs/1803.05457}.

\bibitem[Clemen and Winkler(1999)]{clemen1999combining}
Robert~T. Clemen and Robert~L. Winkler.
\newblock {Combining Probability Distributions from Experts in Risk Analysis}.
\newblock \emph{Risk Analysis}, 19\penalty0 (2):\penalty0 187--203, 1999.
\newblock \doi{10.1111/j.1539-6924.1999.tb00399.x}.

\bibitem[Cobbe et~al.(2021)Cobbe, Kosaraju, Bavarian, Chen, Jun, Kaiser, Plappert, Tworek, Hilton, Nakano, Hesse, and Schulman]{cobbe2021gsm8k}
Karl Cobbe, Vineet Kosaraju, Mohammad Bavarian, Mark Chen, Heewoo Jun, Lukasz Kaiser, Matthias Plappert, Jerry Tworek, Jacob Hilton, Reiichiro Nakano, Christopher Hesse, and John Schulman.
\newblock Training verifiers to solve math word problems, 2021.
\newblock URL \url{https://arxiv.org/abs/2110.14168}.

\bibitem[Dawid and Skene(1979)]{dawid1979em}
A.~P. Dawid and A.~M. Skene.
\newblock {Maximum Likelihood Estimation of Observer Error-Rates Using the EM Algorithm}.
\newblock \emph{Journal of the Royal Statistical Society: Series C (Applied Statistics)}, 28\penalty0 (1):\penalty0 20--28, 1979.
\newblock \doi{10.2307/2346806}.

\bibitem[DeGroot(1974)]{degroot1974consensus}
Morris~H. DeGroot.
\newblock {Reaching a Consensus}.
\newblock \emph{Journal of the American Statistical Association}, 69\penalty0 (345):\penalty0 118--121, 1974.
\newblock \doi{10.1080/01621459.1974.10480137}.

\bibitem[Dekoninck et~al.(2025)Dekoninck, Baader, and Vechev]{cascaderouting2024}
Jasper Dekoninck, Maximilian Baader, and Martin Vechev.
\newblock A unified approach to routing and cascading for {LLM}s.
\newblock In Aarti Singh, Maryam Fazel, Daniel Hsu, Simon Lacoste-Julien, Felix Berkenkamp, Tegan Maharaj, Kiri Wagstaff, and Jerry Zhu, editors, \emph{Proceedings of the 42nd International Conference on Machine Learning}, volume 267 of \emph{Proceedings of Machine Learning Research}, pages 12987--13010. PMLR, 13--19 Jul 2025.
\newblock URL \url{https://proceedings.mlr.press/v267/dekoninck25a.html}.

\bibitem[Dietterich(1998)]{dietterich1998tests}
Thomas~G. Dietterich.
\newblock {Approximate Statistical Tests for Comparing Supervised Classification Learning Algorithms}.
\newblock \emph{Neural Computation}, 10\penalty0 (7):\penalty0 1895--1923, 1998.
\newblock \doi{10.1162/089976698300017197}.

\bibitem[Ding et~al.(2024)Ding, Mallick, Wang, Sim, Mukherjee, R\"{u}hle, Lakshmanan, and Awadallah]{ding2024hybrid}
Dujian Ding, Ankur Mallick, Chi Wang, Robert Sim, Subhabrata Mukherjee, Victor R\"{u}hle, Laks V.~S. Lakshmanan, and Ahmed Awadallah.
\newblock Hybrid llm: Cost-efficient and quality-aware query routing.
\newblock In B.~Kim, Y.~Yue, S.~Chaudhuri, K.~Fragkiadaki, M.~Khan, and Y.~Sun, editors, \emph{International Conference on Learning Representations}, volume 2024, pages 41348--41366, 2024.
\newblock URL \url{https://proceedings.iclr.cc/paper_files/paper/2024/file/b47d93c99fa22ac0b377578af0a1f63a-Paper-Conference.pdf}.

\bibitem[Feng et~al.(2025)Feng, Shen, and You]{feng2024graphrouter}
Tao Feng, Yanzhen Shen, and Jiaxuan You.
\newblock Graphrouter: A graph-based router for llm selections.
\newblock In Y.~Yue, A.~Garg, N.~Peng, F.~Sha, and R.~Yu, editors, \emph{International Conference on Learning Representations}, volume 2025, pages 26186--26203, 2025.
\newblock URL \url{https://proceedings.iclr.cc/paper_files/paper/2025/file/41b6674c28a9b93ec8d22a53ca25bc3b-Paper-Conference.pdf}.

\bibitem[Fu et~al.(2025)Fu, Jiang, Chen, Fan, Geng, and Yang]{specensemble2025}
Jiale Fu, Yuchu Jiang, Junkai Chen, Jiaming Fan, Xin Geng, and Xu~Yang.
\newblock {Fast Large Language Model Collaborative Decoding via Speculation}.
\newblock In Aarti Singh, Maryam Fazel, Daniel Hsu, Simon Lacoste-Julien, Felix Berkenkamp, Tegan Maharaj, Kiri Wagstaff, and Jerry Zhu, editors, \emph{Proceedings of the 42nd International Conference on Machine Learning}, volume 267 of \emph{Proceedings of Machine Learning Research}, pages 17764--17782. PMLR, 13--19 Jul 2025.
\newblock URL \url{https://proceedings.mlr.press/v267/fu25c.html}.

\bibitem[Genest and Zidek(1986)]{genest1986combining}
Christian Genest and James~V. Zidek.
\newblock {Combining Probability Distributions: A Critique and an Annotated Bibliography}.
\newblock \emph{Statistical Science}, 1\penalty0 (1):\penalty0 114--135, 1986.
\newblock \doi{10.1214/ss/1177013825}.

\bibitem[{Granite Team, IBM}(2025)]{granite2024granite3}
{Granite Team, IBM}.
\newblock Granite-3.3-2b-instruct.
\newblock Hugging Face model card, 2025.
\newblock URL \url{https://huggingface.co/ibm-granite/granite-3.3-2b-instruct}.
\newblock Released April 16, 2025.

\bibitem[Guha et~al.(2024)Guha, Chen, Chow, Khare, and R\'{e}]{guha2024smoothie}
Neel Guha, Mayee~F. Chen, Trevor Chow, Ishan~S. Khare, and Christopher R\'{e}.
\newblock Smoothie: Label free language model routing.
\newblock In A.~Globerson, L.~Mackey, D.~Belgrave, A.~Fan, U.~Paquet, J.~Tomczak, and C.~Zhang, editors, \emph{Advances in Neural Information Processing Systems}, volume~37, pages 127645--127672. Curran Associates, Inc., 2024.
\newblock \doi{10.52202/079017-4053}.
\newblock URL \url{https://proceedings.neurips.cc/paper_files/paper/2024/file/e6b57a990462df5afa58d64ce2709db9-Paper-Conference.pdf}.

\bibitem[Guo(2025)]{lens2025}
Jizhou Guo.
\newblock Lens: Learning ensemble confidence from neural states for multi-llm answer integration, 2025.
\newblock URL \url{https://arxiv.org/abs/2507.23167}.

\bibitem[Hendrycks et~al.(2021)Hendrycks, Burns, Basart, Zou, Mazeika, Song, and Steinhardt]{hendrycks2021mmlu}
Dan Hendrycks, Collin Burns, Steven Basart, Andy Zou, Mantas Mazeika, Dawn Song, and Jacob Steinhardt.
\newblock {Measuring Massive Multitask Language Understanding}.
\newblock In \emph{International Conference on Learning Representations (ICLR)}, 2021.
\newblock URL \url{https://openreview.net/forum?id=d7KBjmI3GmQ}.

\bibitem[Hinton(2002)]{hinton2002products}
Geoffrey~E. Hinton.
\newblock {Training Products of Experts by Minimizing Contrastive Divergence}.
\newblock \emph{Neural Computation}, 14\penalty0 (8):\penalty0 1771--1800, 2002.
\newblock \doi{10.1162/089976602760128018}.

\bibitem[Hu et~al.(2025)Hu, Wang, Zhang, Zhou, Chen, Hu, Xiao, and Tan]{der2024}
Jinwu Hu, Yufeng Wang, Shuhai Zhang, Kai Zhou, Guohao Chen, Yu~Hu, Bin Xiao, and Mingkui Tan.
\newblock Efficient dynamic ensembling for multiple llm experts.
\newblock In James Kwok, editor, \emph{Proceedings of the Thirty-Fourth International Joint Conference on Artificial Intelligence, {IJCAI-25}}, pages 8095--8103. International Joint Conferences on Artificial Intelligence Organization, 8 2025.
\newblock \doi{10.24963/ijcai.2025/900}.
\newblock URL \url{https://doi.org/10.24963/ijcai.2025/900}.
\newblock Main Track.

\bibitem[Hu et~al.(2024)Hu, Bieker, Li, Jiang, Keigwin, Ranganath, Keutzer, and Upadhyay]{routerbench2024}
Qitian~Jason Hu, Jacob Bieker, Xiuyu Li, Nan Jiang, Benjamin Keigwin, Gaurav Ranganath, Kurt Keutzer, and Shriyash~Kaustubh Upadhyay.
\newblock Routerbench: A benchmark for multi-llm routing system, 2024.
\newblock URL \url{https://arxiv.org/abs/2403.12031}.

\bibitem[Huang et~al.(2024)Huang, Feng, Li, Xiang, Wang, Liu, and Qin]{huang2024deepen}
Yichong Huang, Xiaocheng Feng, Baohang Li, Yang Xiang, Hui Wang, Ting Liu, and Bing Qin.
\newblock Ensemble learning for heterogeneous large language models with deep parallel collaboration.
\newblock In A.~Globerson, L.~Mackey, D.~Belgrave, A.~Fan, U.~Paquet, J.~Tomczak, and C.~Zhang, editors, \emph{Advances in Neural Information Processing Systems}, volume~37, pages 119838--119860. Curran Associates, Inc., 2024.
\newblock \doi{10.52202/079017-3808}.
\newblock URL \url{https://proceedings.neurips.cc/paper_files/paper/2024/file/d8a6eb79f8ccaacbe7198a5caf3a0323-Paper-Conference.pdf}.

\bibitem[Hui et~al.(2024)Hui, Yang, Cui, Yang, Liu, Zhang, Liu, Zhang, Yu, Lu, Dang, Fan, Zhang, Yang, Men, Huang, Zheng, Miao, Quan, Feng, Ren, Ren, Zhou, and Lin]{hui2024qwencoder}
Binyuan Hui, Jian Yang, Zeyu Cui, Jiaxi Yang, Dayiheng Liu, Lei Zhang, Tianyu Liu, Jiajun Zhang, Bowen Yu, Keming Lu, Kai Dang, Yang Fan, Yichang Zhang, An~Yang, Rui Men, Fei Huang, Bo~Zheng, Yibo Miao, Shanghaoran Quan, Yunlong Feng, Xingzhang Ren, Xuancheng Ren, Jingren Zhou, and Junyang Lin.
\newblock Qwen2.5-coder technical report, 2024.
\newblock URL \url{https://arxiv.org/abs/2409.12186}.

\bibitem[Jacobs et~al.(1991)Jacobs, Jordan, Nowlan, and Hinton]{jacobs1991moe}
Robert~A. Jacobs, Michael~I. Jordan, Steven~J. Nowlan, and Geoffrey~E. Hinton.
\newblock {Adaptive Mixtures of Local Experts}.
\newblock \emph{Neural Computation}, 3\penalty0 (1):\penalty0 79--87, 1991.
\newblock \doi{10.1162/neco.1991.3.1.79}.

\bibitem[Jiang et~al.(2023)Jiang, Ren, and Lin]{jiang2023llmblender}
Dongfu Jiang, Xiang Ren, and Bill~Yuchen Lin.
\newblock {LLM}-blender: Ensembling large language models with pairwise ranking and generative fusion.
\newblock In Anna Rogers, Jordan Boyd-Graber, and Naoaki Okazaki, editors, \emph{Proceedings of the 61st Annual Meeting of the Association for Computational Linguistics (Volume 1: Long Papers)}, pages 14165--14178, Toronto, Canada, July 2023. Association for Computational Linguistics.
\newblock \doi{10.18653/v1/2023.acl-long.792}.
\newblock URL \url{https://aclanthology.org/2023.acl-long.792/}.

\bibitem[Li et~al.(2024)Li, Zhang, Yu, Fu, and Ye]{li2024moreagents}
Junyou Li, Qin Zhang, Yangbin Yu, Qiang Fu, and Deheng Ye.
\newblock More agents is all you need.
\newblock \emph{Transactions on Machine Learning Research}, 2024.
\newblock ISSN 2835-8856.
\newblock URL \url{https://openreview.net/forum?id=bgzUSZ8aeg}.

\bibitem[Li(2025)]{knnrouter2025}
Yang Li.
\newblock {Rethinking Predictive Modeling for LLM Routing: When Simple kNN Beats Complex Learned Routers}, 2025.
\newblock URL \url{https://doi.org/10.48550/arXiv.2505.12601}.
\newblock arXiv:2505.12601.

\bibitem[Liu et~al.(2025)Liu, Quan, Pan, Wu, Chen, and Lin]{coolfusion2024}
Cong Liu, Xiaojun Quan, Yan Pan, Weigang Wu, Xu~Chen, and Liang Lin.
\newblock Cool-fusion: Fuse large language models without training.
\newblock In Wanxiang Che, Joyce Nabende, Ekaterina Shutova, and Mohammad~Taher Pilehvar, editors, \emph{Proceedings of the 63rd Annual Meeting of the Association for Computational Linguistics (Volume 1: Long Papers)}, pages 10617--10627, Vienna, Austria, July 2025. Association for Computational Linguistics.
\newblock \doi{10.18653/v1/2025.acl-long.521}.
\newblock URL \url{https://aclanthology.org/2025.acl-long.521/}.

\bibitem[Lu et~al.(2024)Lu, Yuan, Lin, Lin, Yuan, Zhou, and Zhou]{lu2023zooter}
Keming Lu, Hongyi Yuan, Runji Lin, Junyang Lin, Zheng Yuan, Chang Zhou, and Jingren Zhou.
\newblock Routing to the expert: Efficient reward-guided ensemble of large language models.
\newblock In Kevin Duh, Helena Gomez, and Steven Bethard, editors, \emph{Proceedings of the 2024 Conference of the North American Chapter of the Association for Computational Linguistics: Human Language Technologies (Volume 1: Long Papers)}, pages 1964--1974, Mexico City, Mexico, June 2024. Association for Computational Linguistics.
\newblock \doi{10.18653/v1/2024.naacl-long.109}.
\newblock URL \url{https://aclanthology.org/2024.naacl-long.109/}.

\bibitem[Martins et~al.(2025)Martins, Fernandes, Alves, Guerreiro, Rei, Alves, Pombal, Farajian, Faysse, Klimaszewski, Colombo, Haddow, {de Souza}, Birch, and Martins]{martins2024eurollm}
Pedro~Henrique Martins, Patrick Fernandes, João Alves, Nuno~M. Guerreiro, Ricardo Rei, Duarte~M. Alves, José Pombal, Amin Farajian, Manuel Faysse, Mateusz Klimaszewski, Pierre Colombo, Barry Haddow, José~G.C. {de Souza}, Alexandra Birch, and André~F.T. Martins.
\newblock Eurollm: Multilingual language models for europe.
\newblock \emph{Procedia Computer Science}, 255:\penalty0 53--62, 2025.
\newblock ISSN 1877-0509.
\newblock \doi{https://doi.org/10.1016/j.procs.2025.02.260}.
\newblock URL \url{https://www.sciencedirect.com/science/article/pii/S1877050925006210}.
\newblock Proceedings of the Second EuroHPC user day.

\bibitem[Mavromatis et~al.(2024)Mavromatis, Karypis, and Karypis]{mavromatis2024packllm}
Costas Mavromatis, Petros Karypis, and George Karypis.
\newblock Pack of {LLM}s: Model fusion at test-time via perplexity optimization.
\newblock In \emph{First Conference on Language Modeling}, 2024.
\newblock URL \url{https://openreview.net/forum?id=5Nsl0nlStc}.

\bibitem[Nadeau and Bengio(2003)]{nadeau2003inference}
Claude Nadeau and Yoshua Bengio.
\newblock {Inference for the Generalization Error}.
\newblock \emph{Machine Learning}, 52:\penalty0 239--281, 2003.
\newblock \doi{https://doi.org/10.1023/A:1024068626366}.

\bibitem[Nguyen et~al.(2025)Nguyen, Dao, Hoang, Decugis, Manchanda, Chawla, and Doan]{metallm2024}
Quang~H. Nguyen, Thinh Dao, Duy~C. Hoang, Juliette Decugis, Saurav Manchanda, Nitesh~V. Chawla, and Khoa~D. Doan.
\newblock Metallm: A high-performant and cost-efficient dynamic framework for wrapping llms, 2025.
\newblock URL \url{https://arxiv.org/abs/2407.10834}.

\bibitem[Nitzan and Paroush(1982)]{nitzan1982optimal}
Shmuel Nitzan and Jacob Paroush.
\newblock {Optimal Decision Rules in Uncertain Dichotomous Choice Situations}.
\newblock \emph{International Economic Review}, 23\penalty0 (2):\penalty0 289--297, 1982.
\newblock \doi{10.2307/2526438}.

\bibitem[Ong et~al.(2025)Ong, Almahairi, Wu, Chiang, Wu, Gonzalez, Kadous, and Stoica]{ong2024routellm}
Isaac Ong, Amjad Almahairi, Vincent Wu, Wei-Lin Chiang, Tianhao Wu, Joseph~E. Gonzalez, M.~Waleed Kadous, and Ion Stoica.
\newblock Routellm: Learning to route llms with preference data.
\newblock In Y.~Yue, A.~Garg, N.~Peng, F.~Sha, and R.~Yu, editors, \emph{International Conference on Learning Representations}, volume 2025, pages 34433--34448, 2025.
\newblock URL \url{https://proceedings.iclr.cc/paper_files/paper/2025/file/5503a7c69d48a2f86fc00b3dc09de686-Paper-Conference.pdf}.

\bibitem[Park et~al.(2025)Park, Liu, Gong, and Choi]{lemcts2024}
Sungjin Park, Xiao Liu, Yeyun Gong, and Edward Choi.
\newblock Ensembling large language models with process reward-guided tree search for better complex reasoning.
\newblock In Luis Chiruzzo, Alan Ritter, and Lu~Wang, editors, \emph{Proceedings of the 2025 Conference of the Nations of the Americas Chapter of the Association for Computational Linguistics: Human Language Technologies (Volume 1: Long Papers)}, pages 10256--10277, Albuquerque, New Mexico, April 2025. Association for Computational Linguistics.
\newblock \doi{10.18653/v1/2025.naacl-long.515}.
\newblock URL \url{https://aclanthology.org/2025.naacl-long.515/}.

\bibitem[Prelec et~al.(2017)Prelec, Seung, and McCoy]{prelec2017surprising}
Dra{\v{z}}en Prelec, H.~Sebastian Seung, and John McCoy.
\newblock {A Solution to the Single-Question Crowd Wisdom Problem}.
\newblock \emph{Nature}, 541\penalty0 (7638):\penalty0 532--535, 2017.
\newblock \doi{10.1038/nature21054}.

\bibitem[{Qwen Team}(2025{\natexlab{a}})]{qwen2024qwen25}
{Qwen Team}.
\newblock {Qwen2.5 Technical Report}, 2025{\natexlab{a}}.
\newblock URL \url{https://arxiv.org/abs/2412.15115}.

\bibitem[{Qwen Team}(2025{\natexlab{b}})]{qwen2025qwen3}
{Qwen Team}.
\newblock {Qwen3 Technical Report}, 2025{\natexlab{b}}.
\newblock URL \url{https://arxiv.org/abs/2505.09388}.

\bibitem[Rodrigues and Pereira(2018)]{rodrigues2018deep}
Filipe Rodrigues and Francisco~C. Pereira.
\newblock {Deep Learning from Crowds}.
\newblock In \emph{Proceedings of the AAAI Conference on Artificial Intelligence}, volume~32, pages 1611--1618, 2018.
\newblock \doi{https://doi.org/10.1609/aaai.v32i1.11506}.

\bibitem[Shazeer et~al.(2017)Shazeer, Mirhoseini, Maziarz, Davis, Le, Hinton, and Dean]{shazeer2017moe}
Noam Shazeer, Azalia Mirhoseini, Krzysztof Maziarz, Andy Davis, Quoc Le, Geoffrey Hinton, and Jeff Dean.
\newblock Outrageously large neural networks: The sparsely-gated mixture-of-experts layer.
\newblock In \emph{International Conference on Learning Representations (ICLR)}, 2017.
\newblock URL \url{https://openreview.net/forum?id=B1ckMDqlg}.

\bibitem[Si et~al.(2023)Si, Shi, Zhao, Zettlemoyer, and Boyd-Graber]{si2023more}
Chenglei Si, Weijia Shi, Chen Zhao, Luke Zettlemoyer, and Jordan Boyd-Graber.
\newblock Getting {M}o{RE} out of mixture of language model reasoning experts.
\newblock In Houda Bouamor, Juan Pino, and Kalika Bali, editors, \emph{Findings of the Association for Computational Linguistics: EMNLP 2023}, pages 8234--8249, Singapore, December 2023. Association for Computational Linguistics.
\newblock \doi{10.18653/v1/2023.findings-emnlp.552}.
\newblock URL \url{https://aclanthology.org/2023.findings-emnlp.552/}.

\bibitem[Sikeridis et~al.(2025)Sikeridis, Ramdass, and Pareek]{pickllm2024}
Dimitrios Sikeridis, Dennis Ramdass, and Pranay Pareek.
\newblock Pickllm: Context-aware rl-assisted large language model routing.
\newblock In Qingyun Wang, Wenpeng Yin, Abhishek Aich, Yumin Suh, and Kuan-Chuan Peng, editors, \emph{AI for Research and Scalable, Efficient Systems}, pages 227--239. Springer Nature Singapore, 2025.
\newblock \doi{10.1007/978-981-96-8912-5_11}.

\bibitem[Stone(1961)]{stone1961opinion}
Mervyn Stone.
\newblock {The Opinion Pool}.
\newblock \emph{The Annals of Mathematical Statistics}, 32\penalty0 (4):\penalty0 1339--1342, 1961.
\newblock \doi{10.1214/aoms/1177704873}.

\bibitem[Sun et~al.(2025)Sun, Kagrecha, Manakul, Woodland, and Gales]{sun-etal-2025-skillaggregation}
Guangzhi Sun, Anmol Kagrecha, Potsawee Manakul, Phil Woodland, and Mark Gales.
\newblock {S}kill{A}ggregation: Reference-free {LLM}-dependent aggregation.
\newblock In Wanxiang Che, Joyce Nabende, Ekaterina Shutova, and Mohammad~Taher Pilehvar, editors, \emph{Proceedings of the 63rd Annual Meeting of the Association for Computational Linguistics (Volume 1: Long Papers)}, pages 15532--15548, Vienna, Austria, July 2025. Association for Computational Linguistics.
\newblock \doi{10.18653/v1/2025.acl-long.756}.
\newblock URL \url{https://aclanthology.org/2025.acl-long.756/}.

\bibitem[Tekin et~al.(2024)Tekin, Ilhan, Huang, Hu, and Liu]{tekin2024topla}
Selim~Furkan Tekin, Fatih Ilhan, Tiansheng Huang, Sihao Hu, and Ling Liu.
\newblock {LLM}-{TOPLA}: Efficient {LLM} ensemble by maximising diversity.
\newblock In Yaser Al-Onaizan, Mohit Bansal, and Yun-Nung Chen, editors, \emph{Findings of the Association for Computational Linguistics: EMNLP 2024}, pages 11951--11966, Miami, Florida, USA, November 2024. Association for Computational Linguistics.
\newblock \doi{10.18653/v1/2024.findings-emnlp.698}.
\newblock URL \url{https://aclanthology.org/2024.findings-emnlp.698/}.

\bibitem[Turkmen et~al.(2026)Turkmen, Buyukates, and Bastopcu]{infotheoretic2026}
Yigit Turkmen, Baturalp Buyukates, and Melih Bastopcu.
\newblock {Don't Always Pick the Highest-Performing Model: An Information Theoretic View of LLM Ensemble Selection}, 2026.
\newblock URL \url{https://doi.org/10.48550/arXiv.2602.08003}.
\newblock arXiv:2602.08003.

\bibitem[Walsh et~al.(2025)Walsh, Soldaini, Groeneveld, Lo, Arora, Bhagia, Gu, Huang, Jordan, Lambert, Schwenk, Tafjord, Anderson, Atkinson, Brahman, Clark, Dasigi, Dziri, Ettinger, Guerquin, Heineman, Ivison, Koh, Liu, Malik, Merrill, Miranda, Morrison, Murray, Nam, Poznanski, Pyatkin, Rangapur, Schmitz, Skjonsberg, Wadden, Wilhelm, Wilson, Zettlemoyer, Farhadi, Smith, and Hajishirzi]{olmo2025olmo2}
Evan~Pete Walsh, Luca Soldaini, Dirk Groeneveld, Kyle Lo, Shane Arora, Akshita Bhagia, Yuling Gu, Shengyi Huang, Matt Jordan, Nathan Lambert, Dustin Schwenk, Oyvind Tafjord, Taira Anderson, David Atkinson, Faeze Brahman, Christopher Clark, Pradeep Dasigi, Nouha Dziri, Allyson Ettinger, Michal Guerquin, David Heineman, Hamish Ivison, Pang~Wei Koh, Jiacheng Liu, Saumya Malik, William Merrill, Lester James~Validad Miranda, Jacob Morrison, Tyler Murray, Crystal Nam, Jake Poznanski, Valentina Pyatkin, Aman Rangapur, Michael Schmitz, Sam Skjonsberg, David Wadden, Christopher Wilhelm, Michael Wilson, Luke Zettlemoyer, Ali Farhadi, Noah~A. Smith, and Hannaneh Hajishirzi.
\newblock 2 {OLM}o 2 furious ({COLM}{\textquoteright}s version).
\newblock In \emph{Second Conference on Language Modeling}, 2025.
\newblock URL \url{https://openreview.net/forum?id=2ezugTT9kU}.

\bibitem[Wang et~al.(2023)Wang, Wei, Schuurmans, Le, Chi, Narang, Chowdhery, and Zhou]{wang2023selfconsistency}
Xuezhi Wang, Jason Wei, Dale Schuurmans, Quoc Le, Ed~Chi, Sharan Narang, Aakanksha Chowdhery, and Denny Zhou.
\newblock {Self-Consistency Improves Chain of Thought Reasoning in Language Models}.
\newblock In \emph{International Conference on Learning Representations (ICLR)}, 2023.
\newblock URL \url{https://openreview.net/forum?id=1PL1NIMMrw}.

\bibitem[Yang et~al.(2024)Yang, Zhang, Hui, Gao, Yu, Li, Liu, Tu, Zhou, Lin, Lu, Xue, Lin, Liu, Ren, and Zhang]{yang2024qwenmath}
An~Yang, Beichen Zhang, Binyuan Hui, Bofei Gao, Bowen Yu, Chengpeng Li, Dayiheng Liu, Jianhong Tu, Jingren Zhou, Junyang Lin, Keming Lu, Mingfeng Xue, Runji Lin, Tianyu Liu, Xingzhang Ren, and Zhenru Zhang.
\newblock Qwen2.5-math technical report: Toward mathematical expert model via self-improvement, 2024.
\newblock URL \url{https://arxiv.org/abs/2409.12122}.

\bibitem[Yu et~al.(2024)Yu, Kuo, Ye, Chang, and Li]{yu2024gac}
Yao-Ching Yu, Chun-Chih Kuo, Ziqi Ye, Yu-Cheng Chang, and Yueh-Se Li.
\newblock Breaking the ceiling of the {LLM} community by treating token generation as a classification for ensembling.
\newblock In Yaser Al-Onaizan, Mohit Bansal, and Yun-Nung Chen, editors, \emph{Findings of the Association for Computational Linguistics: EMNLP 2024}, pages 1826--1839. Association for Computational Linguistics, November 2024.
\newblock \doi{10.18653/v1/2024.findings-emnlp.99}.
\newblock URL \url{https://aclanthology.org/2024.findings-emnlp.99/}.

\bibitem[Zellers et~al.(2019)Zellers, Holtzman, Bisk, Farhadi, and Choi]{zellers2019hellaswag}
Rowan Zellers, Ari Holtzman, Yonatan Bisk, Ali Farhadi, and Yejin Choi.
\newblock {H}ella{S}wag: Can a machine really finish your sentence?
\newblock In Anna Korhonen, David Traum, and Llu{\'i}s M{\`a}rquez, editors, \emph{Proceedings of the 57th Annual Meeting of the Association for Computational Linguistics}, pages 4791--4800, Florence, Italy, July 2019. Association for Computational Linguistics.
\newblock \doi{10.18653/v1/P19-1472}.
\newblock URL \url{https://aclanthology.org/P19-1472/}.

\bibitem[Zhang et~al.(2026)Zhang, Li, Wang, Chen, Zhang, Ye, Feng, Wang, Xu, Bai, and Hu]{avengers2025}
Yiqun Zhang, Hao Li, Chenxu Wang, Linyao Chen, Qiaosheng Zhang, Peng Ye, Shi Feng, Xinrun Wang, Jia Xu, Lei Bai, and Shuyue Hu.
\newblock {The Avengers: A Routing Recipe for Collective Intelligence in Language Models}.
\newblock In \emph{Proceedings of the AAAI Conference on Artificial Intelligence}, pages 34870–--34878, 2026.
\newblock \doi{10.1609/aaai.v40i41.40790}.

\end{thebibliography}
\endgroup

\clearpage
\appendix

%\section*{Appendices}

\section{Reproducibility and experimental details}
\label[appendix]{app:setup}

\paragraph{Reproducibility.}
The implementation, cached expert answers, and a \texttt{Makefile} are available
at \url{https://github.com/distributedcognition/jury}. \texttt{make} rebuilds
all tables and experimental figures from the cache in about ten minutes on a laptop without
rerunning any language model, while \texttt{make check} verifies the
regenerated results against the manuscript. The more expensive refitting,
sub-pool, and SkillAggregation analyses include their measurements and
dedicated rerun scripts; full commands and runtime details are provided in the
repository README.
% ===========================================================================
%\section{Experimental details}
%\label[appendix]{app:setup}
% ===========================================================================

\paragraph{Experts.}
The pool includes: Qwen2.5 at 0.5B/1.5B/3B \cite{qwen2024qwen25}, Qwen3-0.6B \cite{qwen2025qwen3}, Qwen2.5-Math-1.5B \cite{yang2024qwenmath}, Qwen2.5-Coder-1.5B \cite{hui2024qwencoder}, OLMo-2-1B \cite{olmo2025olmo2}, Granite-3.3-2B \cite{granite2024granite3} and EuroLLM-1.7B \cite{martins2024eurollm}. Each expert is run under its own chat template. Qwen3 is run with thinking mode disabled, because our single-pass multiple-choice protocol and the 256-token generation budget cannot accommodate it. \Cref{tab:experts} reports each expert's overall and per-source accuracy on all $1560$ benchmark questions, together with the source-only routing baseline and the per-question oracle.

\begin{table}[t]
\centering
\begin{tabular}{lrrrrrr}
\toprule
Expert & Params (B) & All & MMLU & GSM8K & ARC-C & HellaSwag \\
\midrule
\texttt{qwen3b} & 3.00 & \textbf{75.0} & 71.2 & 57.5 & 80.8 & 74.2 \\
\texttt{granite2b} & 2.50 & 66.6 & 64.6 & 69.2 & 68.0 & 65.5 \\
\texttt{qwen1.5b} & 1.50 & 66.5 & 67.1 & 40.0 & 75.2 & 63.0 \\
\texttt{qwen-coder1.5b} & 1.50 & 52.3 & 56.2 & 26.7 & 61.3 & 46.8 \\
\texttt{qwen3-0.6b} & 0.75 & 45.4 & 45.4 & 53.3 & 48.5 & 40.7 \\
\texttt{olmo2-1b} & 1.50 & 41.0 & 47.1 & 74.2 & 42.0 & 31.0 \\
\texttt{qwen0.5b} & 0.50 & 32.4 & 37.5 & 39.2 & 32.8 & 28.5 \\
\texttt{qwen-math} & 1.50 & 28.9 & 28.7 & 52.5 & 26.2 & 27.0 \\
\texttt{eurollm1.7b} & 1.70 & 24.4 & 22.9 & 10.8 & 25.7 & 26.5 \\
\midrule
Random expert & 1.61 & 48.1 & 49.0 & 47.0 & 51.2 & 44.8 \\
Best expert per source & -- & 76.3 & -- & -- & -- & -- \\
Oracle (best per question) & -- & 94.2 & 94.2 & 89.2 & 96.2 & 93.3 \\
\bottomrule
\end{tabular}

\caption{The nine experts alone, over all $1560$ questions --- which is why the
per-source figures here differ by a point or two from \Cref{tab:bysource}, where
every rule including this one is scored on held-out halves averaged over the ten
splits. ``Best expert per source''
is a source-only routing baseline, not a bound on query-conditioned routing; the
oracle is what a perfect per-question selector would score.}
\label{tab:experts}
\end{table}

\begin{figure}[t]
\centering
\includegraphics[width=0.9\textwidth]{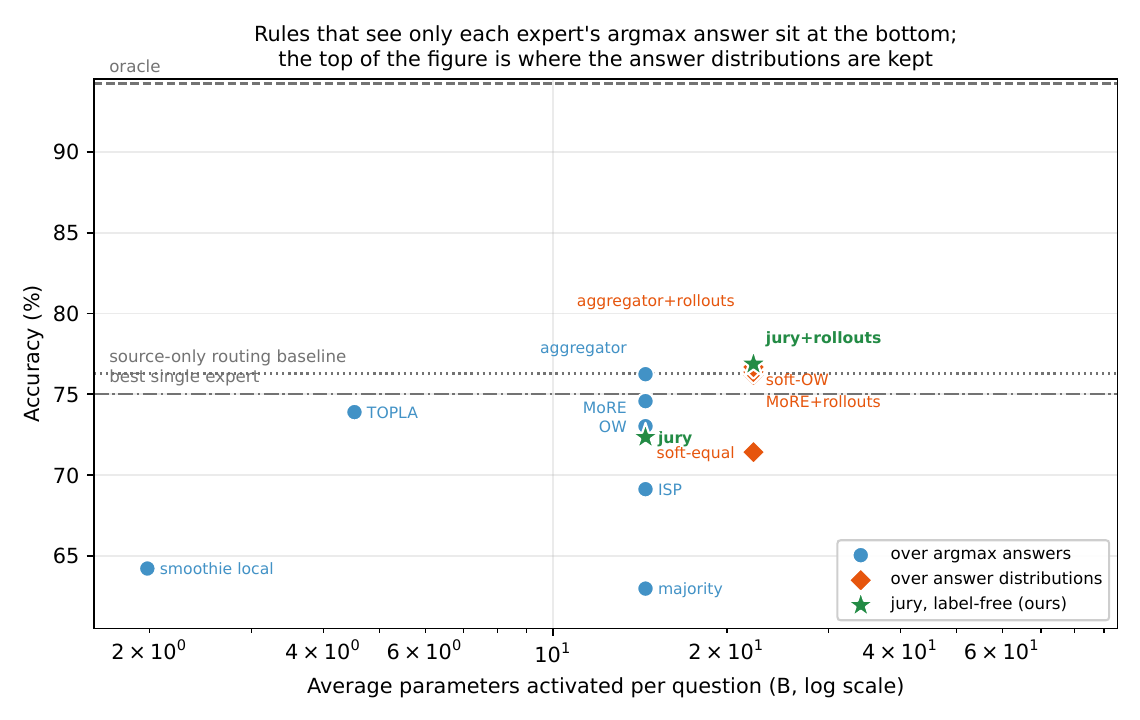}
\caption{Accuracy against average parameters activated per question. The methods
that consume answer distributions occupy the top of the figure; among them the
jury and the supervised linear fusion are separated by about half a point.}
\label{fig:frontier}
\end{figure}

\paragraph{Choice of sources.}
The six MMLU subjects are world religions, professional law, high-school
biology, prehistory, marketing and high-school geography, with $40$ questions
drawn uniformly without replacement from the \texttt{test} split of each.
ARC-Challenge and HellaSwag are the multiple-choice sets that RouterBench
\cite{routerbench2024} scores $0/1$, and a binary score is what this paper's
framework requires end to end. The emission of \eqref{eq:emission} states whether
an expert produced the true answer, the strata of \Cref{app:strata} are defined
by whether the gold answer is on the table, and every interval is a proportion of
questions answered correctly. RouterBench's other sets are graded (GSM8K on five
levels, MT-Bench on ten), and a fractional score identifies no gold answer.
ARC and HellaSwag also have the four-option format the prompt and scorer
already assume. Items with three or five options are dropped rather than
reshaped, since $m = |\cand| - 1$ is what the error channel divides by. Sampling
uses a fixed seed, and the question identifiers and labels are committed with
the repository.

\paragraph{Effect of the added sources on the pool.}
Both added sets favour the largest generalist and penalise the specialists.
\texttt{qwen3b} rises from $66.7\%$ on the original $360$ questions to $75.0\%$,
while \texttt{olmo2-1b}, the pool's best expert on GSM8K, falls from $56.1\%$ to
$41.0\%$. The gap between the best and second-best expert widens from $0.6$ to
$8.4$ points, so the pool measured here is a harder case for combination than
the original one. The source-only routing baseline ($76.3\%$) lies $1.3$ points above the
best single expert ($75.0\%$).

\paragraph{Answer protocol.}
Multiple choice is scored under a prompt that states the question and the four
options and asks for the letter alone. The argmax over the
\texttt{A}/\texttt{B}/\texttt{C}/\texttt{D} continuation tokens is the answer,
and the normalised probability of the chosen letter is the expert's
self-confidence. GSM8K generation is capped at $256$ new tokens. The final
number is taken from a match of \texttt{answer is}~$\ldots$ if present, and
otherwise it is the last number in the completion. A completion containing no
number is recorded as a non-answer and contributes nothing to any combiner.
Numeric answers are compared after stripping separators and rounding at
$10^{-4}$. On GSM8K the eight samples are eight real generations; on multiple
choice they are a quantised copy of the distribution that the single forward
pass already produces.

\paragraph{Candidate sets.}
On multiple choice, $|\cand(q)|$ averages $2.98$ with greedy answers and equals
$4$ after sampling on all but four of the $1440$ questions. On GSM8K it averages
$4.72$ greedily and $17.84$ after sampling, with a maximum of $38$, counting
the greedy and the sampled answers together. Thus the
uniform-error channel nearly coincides with the task answer space on sampled
multiple-choice questions, whereas on GSM8K it is defined by the pool's observed
behaviour.

\subsection{The supervised linear fusion}
\label[appendix]{app:supervised}
``Weights fitted on labels'' can mean two things: estimating each expert's
reliability from the labels and converting it into a weight, or learning
directly the weights that maximise the combined prediction's accuracy. The
baseline we call the supervised linear fusion is the first. Its reliability for
expert $e$ is the accuracy of $e$'s greedy answer on the training half,
\begin{equation}
  \hat{\pi}^{\mathrm{sup}}_e \;=\; \frac{1}{|Q_{\mathrm{tr}}|}
  \sum_{q \in Q_{\mathrm{tr}}} \mathbf{1}\{a_{e,q} = y_q\},
  \label{eq:sup}
\end{equation}
over every training question, unanimous ones included and a non-answer
counted as wrong, clipped to $[10^{-3}, 1 - 10^{-3}]$ and turned into the
log-odds weight $w_e = \log\frac{\hat{\pi}^{\mathrm{sup}}_e\, m}{1 -
\hat{\pi}^{\mathrm{sup}}_e}$ of \Cref{prop:pool}, with $m = |\cand(q)| - 1$ on
the same candidate set as the jury and applied to the same distributions
$p_{e,q}$. It is therefore the fusion rule of \Cref{prop:pool} with its
reliabilities read off labels, and the comparison with the jury is between
two ways of estimating the reliabilities of one rule, not against every way of
fitting weights from labels; the discriminative baselines of \Cref{tab:main}
(MoRE, LLM-TOPLA and our own answer scorer) are the supervised references of
the second kind. The two estimators also differ in three implementation
conventions, which we make explicit for completeness: the jury's M-step \eqref{eq:mstep-dist} averages the
posterior-weighted mass on each candidate over the questions the expert
answered, excludes unanimous questions (which constrain nothing under the
observed support) and clips to $[0.02, 0.98]$; the supervised estimate uses
the greedy answer's correctness, every training question and a wider clip.
A distributional supervised plug-in based on the mass assigned to the gold
answer gives the same qualitative conclusion conditional on the fitted models
(\Cref{app:plugin}).
Optimal Weight is the greedy-answer counterpart with the same reliabilities:
on multiple choice it uses the task-defined $K = 4$, whereas the jury uses
$m = |\cand(q)| - 1$. The two coincide whenever all four options are present
in $\cand(q)$, but can differ on greedy inputs when the pool proposes only two
or three options; see \Cref{sec:comparison}. We therefore also evaluate Optimal Weight refitted
under the jury's convention, which is then the same rule with $\pi$ read off
labels.

The greedy-answer comparison isolates the same distinction as \Cref{prop:reduce}: both methods belong to the same family of
log-odds weighted votes, but estimate the reliabilities differently. The jury
scores $72.3 \pm 1.1\%$ and Optimal Weight, with its task-defined $K = 4$, scores
$73.0 \pm 1.4\%$. Refitted under the jury's own convention, $m = |\cand(q)| - 1$,
so that the two rules differ in nothing but how $\pi$ is estimated, Optimal
Weight scores $73.3 \pm 1.1\%$, a difference of $-0.8$ points from the greedy
jury ($[-1.8, +0.1]$, $p = 0.10$) that is not significant. The signed regime of
\Cref{prop:pool} is not only theoretical: the weakest expert is fitted at $\pi = 0.23 < 0.25$ and therefore receives a negative weight.

\subsection{Reachability and question strata}
\label[appendix]{app:strata}

We consider four settings:
weighted voting over greedy answers, weighted voting over sampled modes,
non-negative fusion over answer distributions, and signed fusion over answer
distributions. Reachability is decided exactly by a linear program that maximises
the margin $\min_{u \ne v} \sum_e w_e (p_e(v) - p_e(u))$ of the gold answer $v$ over
every alternative; the gold answer is reachable when the optimum is positive. For
the two fusions we solve it with a normalisation that only fixes the scale of $w$:
over the simplex ($w_e \ge 0$, $\sum_e w_e = 1$), a linear opinion pool, and over
the box $w \in [-1, 1]^{|\experts|}$, the signed fusion of \Cref{prop:pool}.

Some weighted vote over greedy answers reaches the gold answer on $94.2\%$ of the
questions, yet no combiner in \Cref{tab:main} exceeds $77\%$, so expressiveness
is not the bottleneck. With signed weights the fusion reaches $99.7\%$,
essentially every gold answer that was sampled at all. With non-negative weights,
reachability rises only from $94.2\%$ to $95.8\%$. Yet \Cref{sec:comparison}
shows a much larger accuracy gain from using the distributions. Their main benefit
is therefore not simply to make new answers reachable, but to change how the
already reachable candidates are ranked.

Candidate availability sets an absolute ceiling on any combiner restricted to
$\cand(q)$. With greedy answers, no
expert proposes the gold answer on $5.8\%$ of the questions. Repeated sampling
restores most of these answers, leaving only $0.3\%$ of questions on which the gold
answer is never sampled. Each is an error for every rule confined to the
corresponding candidate set: the first for rules over greedy answers, the
second for rules over the sampled distributions.

\begin{table}[t]
\centering
\begin{tabular}{lrrrrr}
\toprule
Reachable set & All & MMLU & GSM8K & ARC-C & HellaSwag \\
\midrule
Weighted vote, greedy answers & 94.2 & 94.2 & 89.2 & 96.2 & 93.3 \\
Weighted vote, sampled modes & 95.6 & 95.0 & 92.5 & 97.0 & 95.2 \\
Distribution fusion, $\exists\,w \ge 0$ & 95.8 & 96.7 & 91.7 & 96.8 & 95.3 \\
Distribution fusion, $\exists\,w$ signed & 99.7 & 100.0 & 95.8 & 100.0 & 100.0 \\
Upper bound: gold sampled at all & 99.7 & 100.0 & 96.7 & 100.0 & 100.0 \\
\bottomrule
\end{tabular}

\caption{How often the gold answer is reachable under four evidence and weighting
regimes. For a weighted vote, the reachable set is exactly the set of answers
proposed by the experts; distributional reachability is decided exactly by a linear program over non-negative or signed
weights.}
\label{tab:reach}
\end{table}

\paragraph{Where methods separate.}
The \emph{plurality} and \emph{minority} strata defined in
\Cref{tab:main} contain $60.9\%$ and $33.3\%$ of the questions,
respectively. The remaining $5.8\%$ are \emph{unreachable} from
greedy answers: no expert proposes the gold answer. On minority
questions, an average of $2.4$ experts support the gold answer,
against $4.3$ for a plurality answer. Weighted voting can recover
the gold answer by assigning more weight to its supporters;
ties also allow majority voting to answer some minority questions
correctly ($6.2\%$ in \Cref{tab:main}). The jury remains $17.4$ percentage
points below the greedy-answer oracle ($94.3\%$). Most of this gap lies in the
minority stratum, where the jury recovers $55.5\%$ of the questions, compared
with $95.7\%$ in the plurality stratum.

% ===========================================================================
\section{Robustness of the comparison}
\label[appendix]{app:robustness}
% ===========================================================================

\subsection{Accuracy by source}
\label[appendix]{app:bysource}

\begin{table}[t]
\centering
\small
\begin{tabular}{lrrrrr}
\toprule
Method & MMLU & GSM8K & ARC-C & HellaSwag & Overall \\
\midrule
Best single expert & 72.6 & 59.8 & 80.2 & 73.7 & 75.0 \\
\quad + self-consistency on GSM8K & 72.6 & 64.7 & 80.2 & 73.7 & 75.3 \\
Majority vote & 65.2 & 71.5 & 67.5 & 55.8 & 63.0 \\
Equal-weight pool & 68.8 & 78.3 & 75.4 & 67.1 & 71.4 \\
MoRE + repeated sampling & 72.5 & 79.8 & 80.2 & 73.4 & 76.4 \\
Supervised linear fusion & 74.3 & 80.2 & 79.7 & 72.7 & 76.2 \\
\textbf{Jury (ours) + repeated sampling} & 74.8 & 79.8 & 80.1 & 73.9 & 76.9 \\
\midrule
Questions & 240 & 120 & 600 & 600 & 1560 \\
\bottomrule
\end{tabular}

\caption{Accuracy by source dataset. Three quarters of the benchmark is ARC and
HellaSwag, and they reorder the pool, so an overall number alone cannot say whether
a result holds across the four sources or rests on one. Means over the same ten
splits.}
\label{tab:bysource}
\end{table}

\Cref{tab:bysource} breaks the margin over the best expert down by source. The
jury is ahead by $+20.0$ points on GSM8K and $+2.2$ on MMLU, and within a few
tenths of a point on ARC-Challenge ($-0.1$) and HellaSwag ($+0.2$). Weighted by
size, GSM8K contributes $+1.54$ of the $+1.92$, so most of the margin comes from
the source where \texttt{qwen3b} is weakest ($57.5\%$ over all questions,
\Cref{tab:experts}) and a different expert, \texttt{olmo2-1b}, is the best in
the pool ($74.2\%$). Here, the other experts provide information that
\texttt{qwen3b} lacks. Eight-sample
self-consistency on GSM8K, from the same cached samples, lifts \texttt{qwen3b}
there by $+4.9$ points and leaves the other sources unchanged; the jury is still
$+15.1$ ahead of it on GSM8K and $+1.6$ overall. On the two sets
where \texttt{qwen3b} is also the pool's best expert, combining does not improve
on it, but it incurs essentially no accuracy loss either. A deployment cannot know in advance which
questions are which. The margin is smaller than the $+11$ points of the original
$360$-question pool because the added sources widened the lead of the best
expert (\Cref{app:setup}); what a combiner can gain is bounded by how much the
pool's members disagree usefully.

\subsection{Refitting and resampling}
\label[appendix]{app:refit}

\begin{table}[t]
\centering
\footnotesize
\resizebox{\linewidth}{!}{\begin{tabular}{lrrr}
\toprule
& \multicolumn{3}{c}{$\Delta$ vs.\ jury, $95\%$ interval} \\
\cmidrule(lr){2-4}
Resampled & Questions & Split and fit & Both \\
\midrule
Supervised linear fusion & +0.6~[+0.1, +1.2] & +0.7~[+0.0, +1.4] & +0.6~[-0.6, +2.0] \\
MoRE + repeated sampling & +0.5~[-0.4, +1.5] & +0.5~[-0.6, +1.7] & +0.5~[-1.6, +2.6] \\
Answer scoring + repeated sampling & +0.1~[-1.0, +1.2] & +0.2~[-1.3, +1.7] & +1.9~[-1.0, +5.1] \\
Equal-weight pool & +5.5~[+4.1, +7.0] & +5.6~[+4.1, +7.2] & +5.5~[+2.9, +8.3] \\
Majority vote & +14.1~[+12.0, +16.3] & +14.2~[+12.2, +16.3] & +14.1~[+10.5, +17.8] \\
\bottomrule
\end{tabular}
}
\caption{The same paired differences, resampling one more thing at a time.
\emph{Questions} is the paired bootstrap of \Cref{tab:main}, which holds the ten
fitted models fixed. \emph{Split and fit} draws a fresh stratified split and
refits every method on it. \emph{Both} additionally resamples the $1560$ questions
with replacement. The first column is conditional on the ten fitted models; the
other two include the variability of refitting, and their centres are means
over the replicas rather than the point estimates of the first column. $2000$
replicas each.}
\label{tab:refit}
\end{table}

The paired bootstrap resamples questions while holding fixed the ten fitted
models, so it is an interval for the difference \emph{given those fits}.
\Cref{tab:refit} adds one source of variability at a time, over $2000$ replicas
each. Redrawing the split and refitting every method (middle column) barely
changes anything. The majority-vote row goes from $[+12.0, +16.3]$ to
$[+12.2, +16.3]$ and the equal-weight row from $[+4.1, +7.0]$ to $[+4.1, +7.2]$.
The three comparisons against fitted rules widen by a fifth to a third, against
about $70\%$ on the original $360$-question benchmark, because each method is
now fitted on $780$ training questions rather than $180$.

The right column is the nested bootstrap. The questions are resampled with
replacement, then split, and everything is refitted inside the replica, with
every copy of a question assigned to the same side of the split. Only this
column includes, besides the sampling of questions, the variability of
refitting every method on a fresh training half.
Intervals widen by $70\%$ for majority voting, by about $140\%$ for the
comparison with the supervised fusion, and by about $180\%$ for the answer
scorer, whose fit varies most from replica to replica. The margin over the equal-weight pool ($+5.5$, $[+2.9, +8.3]$) and
over majority voting ($+14.1$, $[+10.5, +17.8]$) both survive. In this column the intervals against the supervised rules
include zero: the jury's $+0.6$ over the supervised linear fusion, significant
given the ten fits, has interval $[-0.6, +2.0]$ once refitting is resampled
as well. A bootstrap training half holds about $63\%$ distinct questions, so
every fitted method is fitted on less data than the protocol gives it; the
centre reported is the mean replica difference, which is why the answer
scorer's moves from $+0.1$ to $+1.9$: its fit loses more than the jury's on the
smaller effective training set.

\paragraph{Equivalence with the supervised linear fusion.}
An interval that contains zero says only that no difference was detected.
Equivalence is a separate claim, which requires a margin of practically
negligible difference and an interval that lies inside it. The $90\%$
interval for the jury minus the supervised linear fusion is $[+0.2, +1.1]$,
conditional on the ten fits: the jury is ahead at the $5\%$ level one-sided,
and two one-sided tests support equivalence at a margin of $\pm 1.1$ points
or wider, against $\pm 2.3$ on the original benchmark. The nested $90\%$
interval, $[-0.4, +1.8]$, neither separates the two nor lies within $\pm 1$
point, so no equivalence claim is available at that margin once refitting is
resampled. The results therefore show no significant supervised advantage on
this benchmark, and a label-free advantage that is significant conditional on
the fits but not once the variability of refitting is included.

\paragraph{The distributional supervised plug-in.}
\label[appendix]{app:plugin}
The supervised linear fusion of \eqref{eq:sup} estimates each reliability from
the greedy answer's correctness. A distributional supervised plug-in uses the
mass assigned to the gold answer in place of the posterior-weighted mass in
\eqref{eq:mstep-dist}, while retaining the supervised baseline's averaging and
clipping conventions:
\begin{equation}
  \hat{\pi}^{\mathrm{sup,dist}}_e \;=\; \frac{1}{|Q_{\mathrm{tr}}|}
  \sum_{q \in Q_{\mathrm{tr}}} p_{e,q}(y_q).
  \label{eq:sup-dist}
\end{equation}
The average includes every training question. We define $p_{e,q}(y_q) = 0$
when the gold answer lies outside $\cand(q)$; it is also zero whenever the
expert assigns no mass to an in-support gold answer. We apply the clipping
interval of \eqref{eq:sup}, $[10^{-3}, 1 - 10^{-3}]$, and convert the
resulting reliabilities into the log-odds weights of \Cref{prop:pool}. On the
same ten splits it reaches $76.6 \pm 1.4\%$, between the
greedy-correctness estimate ($76.2 \pm 1.4\%$) and the jury
($76.9 \pm 1.3\%$); the jury's paired difference over it is $+0.3$
($[+0.1, +0.6]$, $p = 0.02$) conditional on the ten fits. We did not run the
nested bootstrap for this variant. The conclusion is the one of
\Cref{sec:comparison}: reading the reliabilities off labels, by either
estimator, buys no measurable accuracy over estimating them from agreement on
this benchmark.

\subsection{Per-task reliabilities}
\label[appendix]{app:spec}

\begin{table}[t]
\centering
\begin{tabular}{llrrr}
\toprule
Weights & Labels & Global & Per task & $\Delta$ \\
\midrule
Supervised linear fusion & yes & 76.2 $\pm$ 1.4 & 76.4 $\pm$ 1.3 & +0.2 \\
Jury (ours) & no & 76.9 $\pm$ 1.3 & 77.3 $\pm$ 1.3 & +0.4 \\
\midrule
\multicolumn{5}{l}{\emph{Label-free minus supervised, same parameterisation}} \\
\quad global & & +0.6 [+0.1, +1.2] & & \\
\quad per task & & +0.9 [+0.3, +1.5] & & \\
\bottomrule
\end{tabular}

\caption{One reliability per expert against one per expert and task, on both sides
of the supervision line. The lower block is the label-free minus supervised
difference at each parameterisation, with its $95\%$ paired bootstrap interval
over questions.}
\label{tab:spec}
\end{table}

A single reliability per expert is a strong assumption for this pool:
\texttt{qwen3b} scores $76.5\%$ on multiple choice against $57.5\%$ on
arithmetic, and \texttt{olmo2-1b} the reverse ($38.3\%$ against $74.2\%$).
\Cref{tab:spec} relaxes this restriction on both sides. The supervised linear
fusion fits its weights on each task's training questions, and the jury runs EM
separately on each task's configurations. The task is part of the query, so
every rule may read it. Both sides improve, and the jury stays ahead: $77.3 \pm 1.3\%$
against $76.4 \pm 1.3\%$, a difference of $+0.9$ ($[+0.3, +1.5]$).
The label-free side improves more ($+0.4$
against $+0.2$), and the $90\%$ interval for label-free minus supervised moves
from $[+0.2, +1.1]$ to $[+0.4, +1.4]$. Whether supervision wins with a richer
parameterisation, per subject or conditioned on the query, remains an open
question.

\subsection{SkillAggregation, OW-L and OW-I on the multiple-choice questions}
\label[appendix]{app:skillagg}

\begin{table}[t]
\centering
\footnotesize
\resizebox{\linewidth}{!}{\begin{tabular}{lllrr}
\toprule
Method & Labels & Input & Accuracy & $\Delta$ vs.\ jury~[95\% CI] \\
\midrule
Majority vote & no & greedy & 62.3 $\pm$ 1.8 & +14.6$^{*}$~[+12.4, +16.9] \\
Inverse Surprising Popularity (ISP) & no & greedy & 68.9 $\pm$ 1.7 & +7.9$^{*}$~[+6.1, +9.7] \\
Optimal Weight, accuracies learned (OW-L) & no & greedy & 71.3 $\pm$ 1.7 & +5.3$^{*}$~[+3.8, +6.9] \\
Optimal Weight, ISP pseudo-labels (OW-I) & no & greedy & 71.6 $\pm$ 1.6 & +5.2$^{*}$~[+3.6, +6.7] \\
Equal-weight pool & no & distributions & 70.8 $\pm$ 1.9 & +5.9$^{*}$~[+4.4, +7.5] \\
\midrule
SkillAggregation~\cite{sun-etal-2025-skillaggregation} & no & greedy & 66.8 $\pm$ 1.8 & +10.2$^{*}$~[+8.2, +12.2] \\
\quad + answer distributions & no & distributions & 73.2 $\pm$ 1.9 & +3.5$^{*}$~[+2.2, +4.8] \\
\quad\quad + development set & 250 & distributions & 72.9 $\pm$ 1.7 & +3.7$^{*}$~[+2.4, +5.0] \\
\quad\quad + best observed epoch and $\lambda$ (test oracle) & test & distributions & 73.5 $\pm$ 1.9 & +3.1 \\
SkillAggregation-X~\cite{sun-etal-2025-skillaggregation} & no & distributions & 49.6 $\pm$ 1.8 & +26.6$^{*}$~[+24.0, +29.3] \\
\quad + development set & 250 & distributions & 73.2 $\pm$ 1.3 & +3.4$^{*}$~[+2.1, +4.8] \\
\quad + best observed epoch and $\lambda$ (test oracle) & test & distributions & 74.0 $\pm$ 1.3 & +2.7 \\
\midrule
Supervised linear fusion & yes & distributions & 75.9 $\pm$ 1.4 & +0.7$^{*}$~[+0.2, +1.3] \\
\textbf{Jury (ours), greedy answers} & \textbf{no} & greedy & 72.4 $\pm$ 1.1 & +4.3$^{*}$~[+3.0, +5.7] \\
\textbf{\quad + repeated sampling} & \textbf{no} & distributions & 76.6 $\pm$ 1.4 &  \\
\bottomrule
\end{tabular}}
\caption{The $1440$ multiple-choice questions only, over the same ten splits.
Every rule other than SkillAggregation, OW-L and OW-I is the one of \Cref{tab:main}, fitted as
there and scored on the multiple-choice test questions; OW-L and OW-I are fitted,
without labels, on the training half of each split. \emph{Labels} is what the
rule reads: SkillAggregation's development set is $250$ labelled questions from the
training half, used to choose $\lambda$ and the epoch, and \emph{test} marks the
best observed epoch and $\lambda$ over the $40$ epochs, selected on the test
questions themselves. $\Delta$ is the paired difference against the jury with
repeated sampling, with a $95\%$ bootstrap interval over questions; $^{*}$ marks
$p < 0.05$. The test-oracle rows give the point difference alone: the selection
used the same questions, so no interval or test is reported.}
\label{tab:skillagg}
\end{table}

SkillAggregation \cite{sun-etal-2025-skillaggregation} assumes a fixed label
space in its published multi-class formulation, so we compare against it on the
multiple-choice questions and leave GSM8K out. The released code is binary. We implement the multi-class extension of the authors'
Appendix~F: a context encoder whose bottleneck predicts the class, one confusion
matrix per judge, a regulariser on those matrices, and the
conditional-independence posterior as the group estimate. The encoder and the
training recipe follow the release (GPT-2 trained end to end, batch $8$, learning
rate $10^{-4}$, every judge initialised at skill $0.7$, $\lambda = 0.1$), with the
question and its four options as the context. We make two changes. We train for
$40$ epochs where the release uses $10$; this horizon was chosen after
inspecting the accuracy trajectory against the gold answers, so the
no-development-set configuration should not be read as free of benchmark-level
protocol tuning. For a judge that reports a
distribution, the posterior uses that distribution as a fractional count, which
reduces to the authors' posterior for a single vote.

We follow their transductive fitting protocol, which is more generous than the
jury's. The model is fitted on the experts' answers to all $1440$ questions, test
questions included, without reading a label, whereas the jury is fitted on the
training half only. For the distributional rows, SkillAggregation receives the
same per-expert answer distributions used by the jury and the equal-weight pool,
from the same eight samples per expert and question. The authors also choose
$\lambda$ and the checkpoint on $250$ labelled questions; we report the method
without that development set (the released $\lambda$, the last epoch), with it,
and at the best observed epoch and $\lambda$ over the $40$ epochs, selected on
the test questions themselves, one choice for all ten splits. That last row is a
test oracle: no protocol could deliver it, and because the selection used the
test questions we report its difference from the jury without an interval.

\paragraph{OW-L and OW-I.}
\citet{beyondmajority2025} also give two label-free estimates of the accuracies
that Optimal Weight needs, and they are the closest published counterparts of
the jury's M-step. OW-L fits the accuracies $x_1, \dots, x_N$ by least squares to
the empirical second-order table $\hat P(A_i = s_k \mid A_j = s_l)$, using the
closed form that table takes under their model, in which every wrong answer is
equally likely (their Equation~6 and Appendix~F.2). OW-I runs ISP and takes each
expert's rate of agreement with ISP's verdicts as its accuracy. Both then weight
expert $i$ by $\log[(K-1)\,x_i/(1 - x_i)]$ with the task-defined $K = 4$ and
take the weighted plurality of the greedy answers. Like ISP they need a fixed
label space, so we evaluate them on the multiple-choice questions, with the
agreement table and the pseudo-labels estimated on the unlabelled training half
of each split, as for ISP and the jury; fitting them on all $1440$ questions
instead, as in their own protocol, moves either result by at most $0.2$ points.
We solve OW-L's least-squares problem with L-BFGS-B from five starts and keep
the lowest loss.

\Cref{tab:skillagg} gives the result. On greedy answers OW-L reaches $71.3\%$
and OW-I $71.6\%$, both above ISP ($68.9\%$) and majority voting ($62.3\%$), and
$1.0$ and $0.9$ points below the greedy jury ($72.4\%$), with paired intervals
$[-0.1, +2.2]$ and $[-0.2, +1.9]$ that include zero. The three estimators read
the same greedy evidence and order the experts alike: the Spearman correlation
between the estimated and the true training accuracies is $0.97$ for both,
although both overestimate every expert, OW-L by $0.14$ and OW-I by $0.10$ on
average. Unlike OW-L and OW-I, the jury can use answer distributions:
with them it reaches $76.6\%$, $5.3$ and $5.2$ points above
OW-L and OW-I ($[+3.8, +6.9]$ and $[+3.6, +6.7]$), and neither rule accepts a
distribution.

On greedy answers SkillAggregation reaches
$66.8\%$, above majority voting ($62.3\%$) and below the jury on the same input
($72.4\%$). On answer distributions it reaches $73.2\%$ without labels, above the
equal-weight pool ($70.8\%$), and the development set does not change this
($72.9\%$). The context-dependent variant, SkillAggregation-X, matches the base
model when $\lambda$ is chosen on the development set ($73.2\%$) and falls to
$49.6\%$ at the released $\lambda = 0.1$, which over-regularises its
context-dependent confusion matrices; at $\lambda \le 0.01$ it stays near $73\%$
from the tenth epoch onwards. The best observed result over the $40$ epochs is
$74.0\%$ for SkillAggregation-X ($\lambda = 0$, epoch $10$) and $73.5\%$ for the
base model ($\lambda = 0.1$, epoch $32$). Even this test-selected upper reference
remains $2.7$ points below the jury with repeated sampling ($3.1$ for the base
model), and below the supervised linear fusion ($75.9\%$).

\subsection{Replication across sub-pools}
\label[appendix]{app:pools}

\begin{table}[t]
\centering
\footnotesize
\setlength{\tabcolsep}{3pt}
\resizebox{\linewidth}{!}{\begin{tabular}{lrrrrrrrrrr}
\toprule
Jury & Pool & Pools & Jury & Degen. & \multicolumn{2}{c}{$-$ equal-weight pool} & \multicolumn{2}{c}{$-$ supervised fusion} & \multicolumn{2}{c}{$-$ best expert} \\
 & size & & & (\%) & $\Delta$ & $>0$ (\%) & $\Delta$ & $>0$ (\%) & $\Delta$ & $>0$ (\%) \\
\midrule
$\alpha = 8$ & 3 & 84 & 64.0 $\pm$ 13.7 & 0 & +2.1 $\pm$ 3.4 & 79 & -0.9 $\pm$ 2.9 & 42 & -0.7 $\pm$ 5.0 & 82 \\
 & 5 & 126 & 71.6 $\pm$ 6.9 & 0 & +4.9 $\pm$ 3.5 & 98 & +0.2 $\pm$ 1.1 & 63 & +1.3 $\pm$ 1.9 & 96 \\
 & 7 & 36 & 74.8 $\pm$ 3.2 & 0 & +4.7 $\pm$ 2.5 & 100 & +0.8 $\pm$ 0.8 & 86 & +1.8 $\pm$ 0.7 & 100 \\
 & 9 & 1 & 76.9 $\pm$ 0.0 & 0 & +5.4 $\pm$ 0.0 & 100 & +0.7 $\pm$ 0.0 & 100 & +1.9 $\pm$ 0.0 & 100 \\
\midrule
$\alpha = 1$ & 3 & 84 & 29.8 $\pm$ 24.3 & 69 & -32.2 $\pm$ 24.3 & 17 & -35.1 $\pm$ 24.9 & 8 & -35.0 $\pm$ 24.8 & 20 \\
 & 5 & 126 & 70.7 $\pm$ 8.4 & 1 & +4.0 $\pm$ 4.7 & 94 & -0.7 $\pm$ 3.9 & 50 & +0.4 $\pm$ 4.2 & 83 \\
 & 7 & 36 & 74.3 $\pm$ 3.1 & 0 & +4.2 $\pm$ 2.4 & 100 & +0.3 $\pm$ 0.3 & 78 & +1.3 $\pm$ 1.2 & 81 \\
 & 9 & 1 & 76.7 $\pm$ 0.0 & 0 & +5.2 $\pm$ 0.0 & 100 & +0.5 $\pm$ 0.0 & 100 & +1.7 $\pm$ 0.0 & 100 \\
\bottomrule
\end{tabular}
}
\caption{Every rule of the main comparison refitted on each sub-pool of $3$,
$5$ and $7$ of the nine experts, and on the full pool, over the same ten splits.
A pool's number is its mean over the splits, so the full-pool row averages
split-level differences and can differ by a tenth of a point from the
question-level paired differences of \Cref{tab:main}; each column gives the mean and
standard deviation over pools of that number, and $>0$ the share of pools on
which the jury is ahead; the three $-$ columns are the jury minus that rule. \emph{Degen.} is the share of pools on which the jury
falls ten or more points below majority voting, the operational criterion used
here to flag degenerate fits. The jury is fitted with the paper's evidence
weight, the multinomial $\alpha = 8$, and with the fractional $\alpha = 1$ of
the ablation. The pools overlap, so the spread is descriptive.}
\label{tab:pools}
\end{table}

\begin{figure}[t]
\centering
\includegraphics[width=\textwidth]{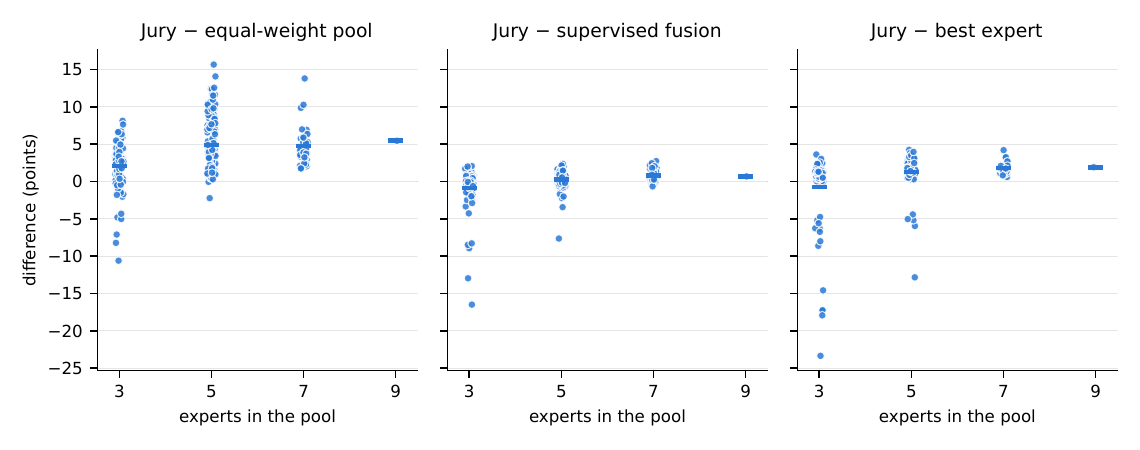}
\caption{The jury's margin over the equal-weight pool, the supervised fusion,
and the best expert across sub-pools of different sizes. Each dot is one
sub-pool; the horizontal bar marks the mean at each pool size.
\Cref{fig:pools-alpha} overlays the $\alpha = 1$ ablation.}
\label{fig:pools}
\end{figure}

The main experiments use a single nine-model ensemble. To test sensitivity to
pool composition, we repeat the comparison on every sub-pool of $3$, $5$ and $7$
experts drawn from it ($84 + 126 + 36 = 246$ pools), on the same ten splits,
with every rule fitted exactly as in \Cref{tab:main} on the pool's own experts: the jury by
EM on the training half, the supervised linear fusion on the training labels,
the equal-weight pool on nothing, and the best expert picked on the training
half. \Cref{tab:pools} and \Cref{fig:pools} report the distribution over pools
of the three paired differences the paper rests on.

\paragraph{Consistent across sub-pools.}
With the paper's evidence weight, $\alpha = 8$, the jury beats the equal-weight
pool on $79\%$ of the $84$ three-expert pools, $98\%$ of the $126$ five-expert
pools and all $36$ seven-expert pools, by $+2.1$, $+4.9$ and $+4.7$ points on
average. It is at or above the pool's best expert on $82\%$, $96\%$ and
$100\%$ of them, by $-0.7$, $+1.3$ and $+1.8$ points on average; on $14$ of
the $15$ three-expert and all $5$ five-expert exceptions the supervised fusion
does not beat that expert either: these are pools of weak experts on which
neither the jury nor the supervised fusion improves on the best expert. Against the
supervised fusion the jury is $-0.9$, $+0.2$ and $+0.8$ points on average and
within one point on $74\%$, $77\%$ and $67\%$ of the pools, with the
remainder split both ways; on the full pool it is $+0.7$.

\paragraph{Degenerate fits with fractional evidence weighting.}
The same experiment also shows why the model-implied multinomial weight is preferable in small pools.
With the fractional $\alpha = 1$ the jury still replicates from five experts
upwards (it beats the equal-weight pool on $94\%$ and $100\%$ of those pools
and is within a point of the supervised fusion on $85\%$ and $100\%$), but on
$58$ of the $84$ three-expert pools it collapses to an average accuracy of
$15.8\%$, far below chance. The collapse persists under the initialisations
we tried: EM converges to the same solution from constant starts of $0.6$ and
$0.9$ and from the plurality-agreement start. Its objective value exceeds that
obtained using the empirically estimated expert reliabilities. These checks
show that the failure is not specific to the default initialisation; they do
not establish local or global optimality. The multi-start analysis of
\Cref{app:multistart} shows that, on the full pool in the greedy $\alpha = 1$
regime, the default fit is not globally optimal: a degenerate fixed point with
higher likelihood exists on every split. No higher-likelihood alternative was
found for the sampled-distribution $\alpha = 8$ regime. At the three-expert
degenerate solution every $\pi_e$ is below $1/|\cand|$, every
weight of \Cref{prop:pool} is negative, and the verdict is the candidate with
the least mass weighted by $|w_e|$, $\arg\min_v \sum_e |w_e|\, p_e(v)$, which
is the candidate with the least total mass only when the weights share one
absolute value. The mechanism is
the fractional likelihood: with one unit of evidence per expert, the hypothesis
that every expert missed the truth costs $\sum_e \log\frac{1-\pi_e}{m}$, which
for small $\pi_e$ is about as cheap as the truth channel of a weak expert, so
once the pool is weak enough the model prefers to explain every configuration as
a shared miss. The multinomial weight $\alpha = 8$ scales every expert's
log-likelihood, the shared-miss channel included, so at fixed parameters it
changes no verdict; what it changes is the E-step, whose posteriors become far
more concentrated, and through them the fixed points EM reaches. From the
default initialisation we observed no degenerate fit with $\alpha = 8$ on any
of the $247$ pools, which is an empirical finding about these pools and not a
guarantee: random starts do reach a degenerate mode on $3$ of $200$ runs
(\Cref{app:multistart}). The collapse is also visible without labels: on one
split, the verdict of a degenerate fit agrees with the greedy plurality on
$17\%$ of the test questions on average, against $67\%$ for the other fits and
$66\%$ at $\alpha = 8$, and a threshold of $40\%$ flags $54$ of the $58$
degenerate fits and $2$ of the $26$ others.

\paragraph{What the choice costs.}
On the full pool the two weights are within noise ($76.9\%$ against $76.7\%$),
and with five and seven experts $\alpha = 8$ is at least as good as
$\alpha = 1$ on every column. The multinomial weight is therefore free where
the fractional one works and decisive where it failed on the pools tested. This confirms the
multinomial choice used by default; its one measurable cost is a more concentrated posterior
(\Cref{app:ablations}). A deployment on a pool of unknown strength can in any
case check the fit without labels by the share of questions on which the
verdict agrees with the plurality.

\subsection{Sensitivity to the EM start}
\label[appendix]{app:multistart}

\begin{table}[t]
\centering
\footnotesize
\setlength{\tabcolsep}{4pt}
\begin{tabular}{lrrrr}
\toprule
Start & Same solution & $\Delta$ log-lik.\ of the others & \multicolumn{2}{c}{Test accuracy (\%)} \\
 & as default & (nats) & min & max \\
\midrule
\multicolumn{5}{l}{\emph{Sampled distributions, $\alpha = 8$}} \\
Constant $0.6$ (default) & 10/10 & -- & 75.0 & 79.5 \\
Constant $0.3$ & 10/10 & -- & 75.0 & 79.5 \\
Constant $0.5$ & 10/10 & -- & 75.0 & 79.5 \\
Constant $0.9$ & 10/10 & -- & 75.0 & 79.5 \\
Plurality agreement & 10/10 & -- & 75.0 & 79.5 \\
Random, $20$ draws & 197/200 & $-9{,}568$ to $-9{,}889$ & 8.5 & 79.5 \\
\midrule
\multicolumn{5}{l}{\emph{Greedy answers, $\alpha = 1$}} \\
Constant $0.6$ (default) & 10/10 & -- & 71.3 & 74.5 \\
Constant $0.3$ & 0/10 & $+33$ to $+114$ & 7.6 & 9.2 \\
Constant $0.5$ & 10/10 & -- & 71.3 & 74.5 \\
Constant $0.9$ & 10/10 & -- & 71.3 & 74.5 \\
Plurality agreement & 10/10 & -- & 71.3 & 74.5 \\
Random, $20$ draws & 144/200 & $+33$ to $+114$ & 7.6 & 74.5 \\
\bottomrule
\end{tabular}

\caption{The EM of \Cref{sec:method} re-run on each of the ten splits of the
full pool from $25$ initialisations, in both evidence regimes of
\Cref{tab:main}. \emph{Same solution as default} counts the runs whose final
log-likelihood is within $10^{-3}$ nats of the default start's;
$\Delta$ log-lik.\ is the range, over the other runs, of their log-likelihood
minus the default start's; the accuracy columns are the extremes over the
runs and splits of that row.}
\label{tab:multistart}
\end{table}

For the sampled-distribution regime with $\alpha = 8$, we rerun EM on each of
the ten splits from $25$ initialisations: constants $0.3$, $0.5$, $0.6$ and
$0.9$, a plurality-agreement initialisation that sets $\pi_e$ to the mass
expert $e$ places on the plurality answer on average, and $20$ random vectors
sampled uniformly from $[0.05, 0.95]$ (\Cref{tab:multistart}). Of the $250$
runs, $247$ converge to the same solution as the default $0.6$ start to
numerical tolerance. The remaining three converge to a degenerate solution
whose log-likelihood is lower by approximately $9{,}600$ nats, where nats are log-likelihood units based on
the natural logarithm. Thus, on every split, the highest-likelihood solution found is the one reached by the default
start, with identical verdicts and reliabilities differing by at most
$2 \times 10^{-7}$ across the non-degenerate runs.

With greedy answers and $\alpha=1$, every split has a degenerate fixed point whose log-likelihood
is higher than that of the solution reached from the default $0.6$ initialisation.
The gap ranges from $33$ to $114$ nats across splits. At this mode, the
strongest experts are assigned very low reliabilities and consequently receive
negative weights, yielding approximately $9\%$ accuracy. The default $0.6$ start
never reaches this mode, whereas the constant $0.3$ start does on every split
and $19$ of $20$ random starts reach it on at least one split. Thus the greedy
fit reported in \Cref{tab:main} is not a global maximiser of the single-answer
likelihood: the default start converges to a lower-likelihood fixed point. In
contrast, in the sampled-distribution regime with
$\alpha = 8$, no higher-likelihood alternative was found in the $250$ runs.

% ===========================================================================
\section{Ablations}
\label[appendix]{app:ablations}
% ===========================================================================

\begin{figure}[t]
\centering
\includegraphics[width=\textwidth]{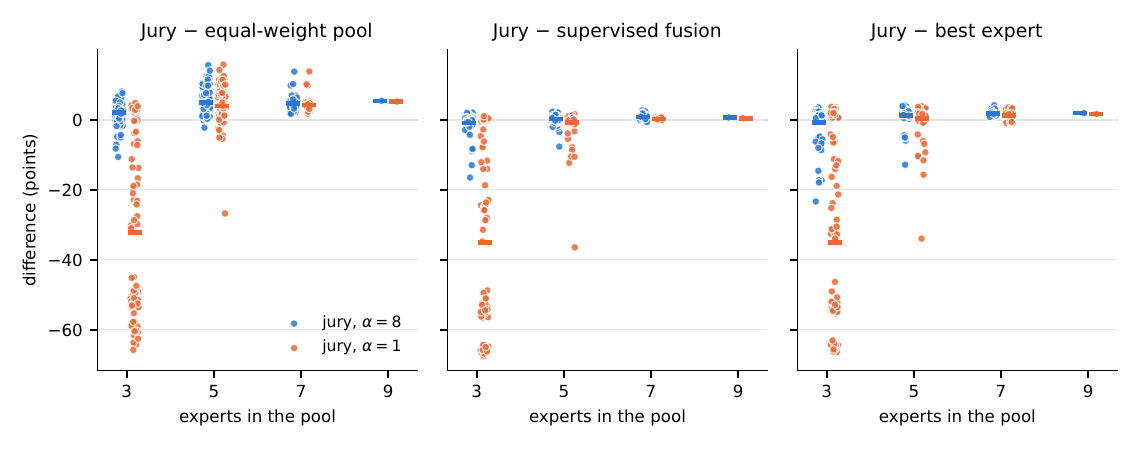}
\caption{\Cref{fig:pools} with the fractional $\alpha = 1$ overlaid. With three
experts, the lower cluster corresponds to the degenerate fits described in
\Cref{app:pools}; no such fits were observed with multinomial weighting from
the default initialisation on the tested pools.}
\label{fig:pools-alpha}
\end{figure}

\paragraph{Evidence weight.}
Sweeping $\alpha$ over $\{0.5, 1, 2, 4, 8\}$ gives $75.9$, $76.7$, $76.8$, $76.9$
and $76.9\%$. All values from $\alpha = 1$ upwards are within noise of the
default $\alpha = 8$, the exact multinomial weight for eight samples, and only
$\alpha = 0.5$ has a cost ($-1.0$ points). On the full pool the results are
therefore insensitive to this choice for $\alpha \ge 1$. On sub-pools of three
experts they are not: \Cref{app:pools} reports degenerate fits with
$\alpha = 1$ on most three-expert pools, but none with $\alpha = 8$ from the
default initialisation on the tested pools (\Cref{fig:pools-alpha}). What $\alpha$ does change on the full pool is the
calibration of the posterior, reported below.

\paragraph{Correlated errors.}
\Cref{tab:testimony} reports both halves of the check in \Cref{sec:ablations}.
On greedy answers a shared lure leaves the verdicts identical to those of the
independent model, except for a $-0.05$ change with the pool-wide grouping. With
repeated sampling the susceptibilities are fitted away from zero (the largest
averages $0.7$ by lineage and $0.8$ pool-wide over the ten splits) and the channel costs accuracy: $-0.6$ points by
lineage, on none of the ten splits positive, and $-0.4$ pool-wide. Under the
fractional weight $\alpha = 1$ every susceptibility collapsed to zero and the
verdicts were identical.
Measured against gold, the excess probability that two experts give the
\emph{same} wrong answer when both are wrong, over the rate expected from
independent errors, is $+10.3$ points for pairs sharing a pre-training lineage
and $+11.6$ points for pairs that do not (against $+1.2$ and $+3.5$ on the
original benchmark, since four-option questions concentrate the wrong answers).
Lineage therefore does not predict correlated error, which is a caution about
family-diversity heuristics. The observed dependence is therefore real, but
this parameterisation does not improve predictive accuracy.

\begin{table}[t]
\centering
\footnotesize
\begin{tabular}{lrrr}
\toprule
& Pairs & Agree on error & Excess over chance \\
\midrule
Same pre-training lineage & 10 & 42.0 & +10.3 \\
Different lineage & 26 & 43.9 & +11.6 \\
\midrule
\multicolumn{4}{l}{\emph{Accuracy with a shared-error channel switched on}} \\
\quad Greedy answers, no lure & -- & 72.3 & \\
\quad\quad + lineage lure & -- & 72.3 & $\pm$0.00~(identical) \\
\quad\quad + pool lure & -- & 72.3 & -0.05~(-0.65) \\
\quad With repeated sampling, no lure & -- & 76.9 & \\
\quad\quad + lineage lure & -- & 76.2 & -0.63~(-5.68) \\
\quad\quad + pool lure & -- & 76.4 & -0.42~(-1.59) \\
\bottomrule
\end{tabular}

\caption{Is there correlated error to model? Measured against gold for every pair
of experts, and then by offering the jury a shared-error channel over the same ten
splits. In the lower block, the parenthesised value is the paired $t$ statistic
of the difference over the ten splits.}
\label{tab:testimony}
\end{table}

\paragraph{Selective override.}
\Cref{tab:override} sweeps a threshold $\tau$ by which the verdict must beat the
plurality before it may overturn it. For the jury posterior and for the
discriminative scorer's margin alike, accuracy is highest at or near
$\tau = 0$: protecting the plurality stratum costs more than it returns. The
jury's posterior is so concentrated under the multinomial weight that
thresholds up to $0.5$ change almost nothing.

\begin{table}[t]
\centering
\footnotesize
\begin{tabular}{lrrrr}
\toprule
$\tau$ & Override rate & Accuracy & Plurality & Minority \\
\midrule
\multicolumn{5}{l}{\emph{Jury score, with repeated sampling}} \\
\quad $0$ (trust the jury) & 23.4 & 76.9 & 95.7 & 55.5 \\
\quad 0.1 & 23.3 & 76.9 & 95.8 & 55.5 \\
\quad 0.2 & 23.3 & 76.9 & 95.8 & 55.5 \\
\quad 0.3 & 23.2 & 76.9 & 95.8 & 55.4 \\
\quad 0.5 & 22.9 & 76.9 & 96.0 & 55.2 \\
\quad 0.7 & 22.2 & 76.8 & 96.1 & 54.6 \\
\midrule
\multicolumn{5}{l}{\emph{discriminative scorer, greedy answers}} \\
\quad 0.00 & 28.8 & 76.2 & 93.1 & 58.7 \\
\quad 0.12 & 25.5 & 76.1 & 94.7 & 55.4 \\
\quad 0.24 & 22.2 & 75.6 & 95.8 & 51.6 \\
\quad 0.36 & 18.5 & 74.6 & 97.1 & 46.3 \\
\quad $\infty$ (consensus) & 0.0 & 63.0 & 100.0 & 6.2 \\
\multicolumn{5}{l}{\emph{discriminative scorer, with repeated sampling}} \\
\quad 0.00 & 29.3 & 76.7 & 92.9 & 60.3 \\
\quad 0.12 & 25.8 & 76.6 & 94.6 & 57.0 \\
\quad 0.24 & 22.6 & 76.1 & 96.0 & 53.0 \\
\quad 0.36 & 19.5 & 75.4 & 97.1 & 48.8 \\
\quad $\infty$ (consensus) & 0.0 & 63.0 & 100.0 & 6.2 \\
\bottomrule
\end{tabular}

\caption{Selective override: the verdict must beat the plurality by $\tau$ to
stand. $\tau = \infty$ returns the plurality answer always; $\tau = 0$ always
takes the verdict. Accuracy never rises as $\tau$ grows.}
\label{tab:override}
\end{table}

\paragraph{Calibration.}
Expected calibration error is $0.223$ over ten bins: mean top-1 confidence is
$0.991$ against $76.9\%$ accuracy, with a Brier score of $0.448$ and a negative
log-likelihood of $4.97$ where the gold answer is on the table. The
multinomial weight makes the posterior nearly one-hot, since it counts eight
units of evidence per expert; with the fractional weight $\alpha = 1$ the same
numbers are $0.155$, $0.917$, $0.388$ and $0.984$. The posterior ranks
candidates well but needs recalibration before being read as a probability,
and more so at $\alpha = 8$ than at $\alpha = 1$.

% ===========================================================================
\section{The support control}
\label[appendix]{app:support}
% ===========================================================================

\begin{table}[t]
\centering
\small
\resizebox{\linewidth}{!}{\begin{tabular}{llrrrrr}
\toprule
& & \multicolumn{2}{c}{All $1560$ questions} & & \multicolumn{2}{c}{Multiple choice only} \\
\cmidrule(lr){3-4} \cmidrule(lr){6-7}
Rule & Labels & Observed & Four options & $\Delta$~[95\% CI] & Observed & Four options \\
\midrule
Optimal Weight, greedy answers & yes & 73.3 $\pm$ 1.1 & 73.0 $\pm$ 1.4 & -0.2~[-1.2, +0.9] & 73.5 & 73.2 \\
Jury (ours), greedy answers & no & 72.3 $\pm$ 1.1 & 72.0 $\pm$ 1.3 & -0.4~[-1.2, +0.4] & 72.4 & 72.0 \\
Supervised linear fusion & yes & 76.2 $\pm$ 1.4 & 76.2 $\pm$ 1.4 & +0.0~[+0.0, +0.0] & 75.9 & 75.9 \\
Jury (ours) + repeated sampling & no & 76.9 $\pm$ 1.3 & 76.9 $\pm$ 1.3 & +0.0~[+0.0, +0.0] & 76.6 & 76.6 \\
\bottomrule
\end{tabular}
}
\caption{The candidate set read off the pool's answers (\emph{Observed}, the
paper's convention) against the task's four options on every multiple-choice
question (\emph{Four options}), with every parameter re-estimated under each.
Same answers, same ten splits. $\Delta$ is the paired difference, four options
minus observed, with a $95\%$ bootstrap interval over questions; GSM8K keeps the
observed support in both columns, so the right-hand block restricts to the
$1440$ multiple-choice questions.}
\label{tab:support}
\end{table}

\Cref{sec:model} builds $\cand(q)$ from the answers the pool produced and
defines the emission model on that support. On multiple choice the answer
space is known before any expert is asked, so the support need not be
reconstructed from the testimony. This control fixes $\cand(q) = \{A, B, C,
D\}$ on every multiple-choice question: $m = 3$ always, an option nobody
proposed is a candidate with zero empirical mass, and a question on which the
pool is unanimous is no longer a singleton and enters EM, since under a fixed
space unanimity is evidence about the reliabilities. About ten of the roughly
$720$ multiple-choice training questions per split are unanimous on greedy
answers. Every parameter is re-estimated under the fixed support; nothing is
changed at prediction time only. GSM8K keeps the observed support, its answer
space being unbounded. The supervised linear fusion and Optimal Weight are
run under the same convention, so the control applies on both sides of the
supervision line; for Optimal Weight the four-option column is the
task-defined $K = 4$ of \Cref{tab:main} and the observed column the refit of
\Cref{app:supervised}.

\Cref{tab:support} gives the result. With repeated sampling nothing changes:
the sampled support already contains all four options on all but four of the
$1440$ questions (\Cref{sec:setup}), so the fitted reliabilities agree to
three decimals and every verdict is identical, for the jury and for the
supervised fusion alike. On greedy answers, where the pool proposes $2.98$
options on average, the four-option support moves the jury by $-0.4$ points
($[-1.2, +0.4]$) and Optimal Weight by $-0.2$ ($[-1.2, +0.9]$), neither
significant; the greedy reliabilities rise by $0.01$ to $0.04$, since the
unanimous questions now count as agreement. On this benchmark the
construction of the support is therefore not what the results rest on. The
control says nothing about the other assumptions of the model, conditional
independence in particular, which \Cref{app:ablations} examines separately.

% ===========================================================================
\section{Proof of \texorpdfstring{\Cref{prop:identifiability}}{Proposition \ref{prop:identifiability}}}
\label[appendix]{app:proofident}
% ===========================================================================

\begin{proof}
Write $\cand=\cand(q)$, with $|\cand|=K\ge3$.
Throughout the proof, all expectations are taken within the
group of questions with $|\cand(q)|=K$.
As the reliabilities are question-independent, identifying them
within this group identifies the same vector $\pi$ for all questions.

For a question $q$, let expert $e$ provide a fixed number
$n_e\ge1$ of sampled answers $a_{e,1},\ldots,a_{e,n_e}$.
Its empirical answer distribution is
\[
  p_e(v)=\frac{1}{n_e}\sum_{r=1}^{n_e}
    \mathbf 1\{a_{e,r}=v\},
  \qquad v\in\cand,
\]
where we drop $q$ in the notation.
As each sample follows the emission model \eqref{eq:emission},
we get
\begin{align*}
  \mathbb E[p_e(v)\mid z,q]
  &=\frac{1}{n_e}\sum_{r=1}^{n_e}
    P(a_{e,r}=v\mid z,q)\\
  &=P(a_e=v\mid z,q).
\end{align*}
Thus the empirical frequencies are unbiased for the emission
probabilities.

Define the centred frequencies and rescaled reliabilities by
\begin{align*}
  x_e(v)&=p_e(v)-\frac{1}{K},\\
  s_e&=\frac{K\pi_e-1}{K-1}.
\end{align*}
Equivalently, $\pi_e=((K-1)s_e+1)/K$.
In particular, $s_e=0$ corresponds exactly to chance-level
reliability, $\pi_e=1/K$.

Using the conditional mean of $p_e(v)$ above, the emission model
gives
\begin{align*}
  &\mathbb E[x_e(v)\mid z,q]\\
  &\quad=
  \begin{cases}
    \pi_e-\dfrac{1}{K}, & v=z,\\[4pt]
    \dfrac{1-\pi_e}{K-1}-\dfrac{1}{K}, & v\ne z.
  \end{cases}
\end{align*}
Substituting the expression for $\pi_e$ simplifies to
\begin{equation}
  \mathbb E[x_e(v)\mid z,q]
  =
  \begin{cases}
    \dfrac{K-1}{K}s_e, & v=z,\\[4pt]
    -\dfrac{1}{K}s_e, & v\ne z.
  \end{cases}
  \label{eq:ident-centred-mean}
\end{equation}

For distinct experts $i,j,k$, define the second- and third-order
centred agreement moments
\begin{align}
  A^2_{ij}
  &:=\mathbb E\!\left[
    \sum_{v\in\cand}x_i(v)x_j(v)
  \right],
  \label{eq:ident-pair-definition}\\
  A^3_{ijk}
  &:=\mathbb E\!\left[
    \sum_{v\in\cand}x_i(v)x_j(v)x_k(v)
  \right].
  \label{eq:ident-triple-definition}
\end{align}
These moments are determined by the joint distribution of the
observed empirical distributions, thus their definitions do not
involve the latent true answer.

By assumption, the empirical distributions are conditionally
independent across experts given $(z,q)$.
Their centred responses therefore satisfy
\begin{align*}
  &\mathbb E[x_i(v)x_j(v)\mid z,q]\\
  &\quad=\mathbb E[x_i(v)\mid z,q]\,
    \mathbb E[x_j(v)\mid z,q],
\end{align*}
and the analogous factorisation holds for three distinct experts.
Each support contains one candidate with $v=z$ and $K-1$
candidates with $v\ne z$.
Using \eqref{eq:ident-centred-mean}, their contributions to the
second-order moment give
\begin{align}
  A^2_{ij}
  &=s_i s_j\left[
    \frac{(K-1)^2}{K^2}+\frac{K-1}{K^2}
  \right] \notag\\
  &=\frac{K-1}{K}s_i s_j.
  \label{eq:ident-pair}
\end{align}
The first term inside the brackets comes from $v=z$;
the second sums the contributions of the $K-1$ alternatives.
For the third-order moment, the product of three negative
coefficients at an alternative candidate changes the sign
of the second term:
\begin{align}
  A^3_{ijk}
  &=s_i s_j s_k\left[
    \frac{(K-1)^3}{K^3}-\frac{K-1}{K^3}
  \right] \notag\\
  &=\frac{(K-1)(K-2)}{K^2}s_i s_j s_k.
  \label{eq:ident-triple}
\end{align}
The conditional sums just computed do not depend on $z$ or $q$
once the support size is fixed.
Averaging over the latent answer and the questions in this group
therefore leaves both expressions unchanged, giving the
displayed moments.

Whenever $A^2_{ij}\ne0$, dividing \eqref{eq:ident-triple}
by \eqref{eq:ident-pair} cancels $s_i s_j$ and yields
\begin{align*}
  \frac{A^3_{ijk}}{A^2_{ij}}
  &=\frac{K-2}{K}s_k,\\
  s_k&=\frac{K}{K-2}\frac{A^3_{ijk}}{A^2_{ij}}.
\end{align*}
Since $K\ge3$, this coefficient is nonzero.
Returning to the original parameterisation gives
\begin{align}
  \pi_k
  &=\frac{(K-1)s_k+1}{K} \notag\\
  &=\frac{1}{K}
    +\frac{K-1}{K-2}\frac{A^3_{ijk}}{A^2_{ij}}.
  \label{eq:ident-reconstruction}
\end{align}

It remains to check that a usable pair $i,j$ exists for every
target expert $k\in\experts$.
By \eqref{eq:ident-pair},
\[
  A^2_{ij}\ne0
  \quad\Longleftrightarrow\quad
  s_i\ne0\ \text{and}\ s_j\ne0.
\]
The assumption supplies at least three experts with $s_e\ne0$.
For any $k$, at least two of them are different from $k$;
choosing these as $i,j$ makes the denominator in
\eqref{eq:ident-reconstruction} nonzero.

The formula also covers chance-level experts: if $s_k=0$,
then $A^3_{ijk}=0$ and it returns $\pi_k=1/K$.

The moments \eqref{eq:ident-pair-definition} and
\eqref{eq:ident-triple-definition} are functions of the joint
distribution of the unlabelled expert outputs on the
task-defined supports.
In particular, a usable pair can be recognised from
$A^2_{ij}\ne0$ without knowing the experts' reliabilities
in advance.

Now suppose two reliability vectors $\pi$ and $\pi'$ induce the
same observable distribution.
They then have the same second- and third-order moments.
For each target expert $k$, choose a pair with $A^2_{ij}\ne0$
as above.
Applying \eqref{eq:ident-reconstruction} to the shared moments
gives $\pi_k=\pi'_k$.
This holds for every $k\in\experts$, so $\pi=\pi'$,
proving identifiability.
\end{proof}

\end{document}